\DeclareMathOperator*{\med}{med}
\DeclareMathAlphabet\mathbfcal{OMS}{cmsy}{b}{n}
\newtheorem{lemma}{Lemma}
\title{\LARGE \bf FAST-LIO2: Fast Direct LiDAR-inertial Odometry}
\author{Wei Xu$^{*1}$, Yixi Cai$^{*1}$, Dongjiao He$^{1}$, Jiarong Lin$^{1}$, Fu Zhang$^{1}$
    \thanks{$^{*}$These two authors contribute equally to this work.}
	\thanks{$^{1}$All authors are with Department of Mechanical Engineering, University of Hong Kong. {\it \{xuweii,yixicai,hdj65822,jiarong.lin\}}{\it @connect.hku.hk, fuzhang@hku.hk}}
}%
\begin{document}
\maketitle
\begin{abstract}
This paper presents FAST-LIO2: a fast, robust, and versatile LiDAR-inertial odometry framework. Building on a highly efficient tightly-coupled iterated Kalman filter, FAST-LIO2 has two key novelties that allow fast, robust, and accurate LiDAR navigation (and mapping). The first one is directly registering raw points to the map (and subsequently update the map, i.e., mapping) without extracting features. This enables the exploitation of subtle features in the environment and hence increases the accuracy. The elimination of a hand-engineered feature extraction module also makes it naturally adaptable to emerging LiDARs of different scanning patterns; The second main novelty is maintaining a map by an incremental k-d tree data structure, \textit{ikd-Tree}, that enables incremental updates (i.e., point insertion, delete) and dynamic re-balancing. Compared with existing dynamic data structures (octree, R$^\ast$-tree, \textit{nanoflann} k-d tree), \textit{ikd-Tree} achieves superior overall performance while naturally supports downsampling on the tree. We conduct an exhaustive benchmark comparison in 19 sequences from a variety of open LiDAR datasets. FAST-LIO2 achieves consistently higher accuracy at a much lower computation load than other state-of-the-art LiDAR-inertial navigation systems. Various real-world experiments on solid-state LiDARs with small FoV are also conducted. Overall, FAST-LIO2 is computationally-efficient (e.g., up to 100 $Hz$ odometry and mapping in large outdoor environments), robust (e.g., reliable pose estimation in cluttered indoor environments with rotation up to 1000 $deg/s$), versatile (i.e., applicable to both multi-line spinning and solid-state LiDARs, UAV and handheld platforms, and Intel and ARM-based processors), while still achieving higher accuracy than existing methods. Our implementation of the system FAST-LIO2, and the data structure \textit{ikd-Tree} are both open-sourced on Github\footnote[2]{\label{ftnt:fastlio}\url{https://github.com/hku-mars/FAST_LIO}}$^{\textstyle ,}$\footnote[3]{\label{ftnt:ikdtree}\url{https://github.com/hku-mars/ikd-Tree}}.
\end{abstract}

\section{Introduction}
Building a dense 3-dimension (3D) map of an unknown environment in real-time and simultaneously localizing in the map (i.e., SLAM) is crucial for autonomous robots to navigate in the unknown environment safely. The localization provides state feedback for the robot onboard controllers, while the dense 3D map provides necessary information about the environment (i.e., free space and obstacles) for trajectory planning. Vision-based SLAM \cite{forster2016svo, forster2016manifold, qin2018vins, campos2021orb} is very accurate in localization but maintains only a sparse feature map and suffers from illumination variation and severe motion blur. On the other hand, real-time dense mapping \cite{newcombe2012dense, meilland20133d, bloesch2018codeslam, kerl2013dense} based on visual sensors at high resolution and accuracy with only the robot onboard computation resources is still a grand challenge.

Due to the ability to provide direct, dense, active, and accurate depth measurements of environments, 3D light detection and ranging (LiDAR) sensor has emerged as another essential sensor for robots \cite{thrun2006stanley, urmson2008autonomous}. Over the last decade, LiDARs have been playing an increasingly important role in many autonomous robots, such as self-driving cars \cite{levinson2011towards} and autonomous UAVs \cite{liu2017planning, gao2019flying}. Recent developments in LiDAR technologies have enabled the commercialization and mass production of more lightweight, cost-effective (in a cost range similar to global shutter cameras), and high performance (centimeter accuracy at hundreds of meters measuring range) solid-state LiDARs~\cite{wang2020mems, liu2020low}, drawing much recent research interests~\cite{lin2019loam_livox,  li2021towards,lin2019fast,wang2021lightweight,liu2021balm}. The considerably reduced cost, size, weight, and power of these LiDARs hold the potential to benefit a broad scope of existing and emerging robotic applications.

40The central requirement for adopting LiDAR-based SLAM approaches to these widespread applications is to obtain accurate, low-latency state estimation and dense 3D map with limited onboard computation resources. However, efficient and accurate LiDAR odometry and mapping are still challenging problems: 1) Current LiDAR sensors produce a large amount of 3D points from hundreds of thousands to millions per second. Processing such a large amount of data in real-time and on limited onboard computing resources requires a high computation efficiency of the LiDAR odometry methods; 2) To reduce the computation load, features points, such as edge points or plane points, are usually extracted based on local smoothness. However, the performance of the feature extraction module is easily influenced by the environment. For example, in structure-less environments without large planes or long edges, the feature extraction will lead to few feature points. This situation is considerably worsened if the LiDAR Field of View (FoV) is small, a typical phenomenon of emerging solid-state LiDARs \cite{lin2019loam_livox}. Furthermore, the feature extraction also varies from LiDAR to LiDAR, depending on the scanning pattern (e.g., spinning, prism-based \cite{liu2020low}, MEMS-based \cite{wang2020mems}) and point density. So the adoption of a LiDAR odometry method usually requires much hand-engineering work; 3) LiDAR points are usually sampled sequentially while the sensor undergoes continuous motion. This procedure creates significant motion distortion influencing the performance of the odometry and mapping, especially when the motion is severe. Inertial measurement units (IMUs) could mitigate this problem but introduces additional states (e.g., bias, extrinsic) to estimate; 4) LiDAR usually has a long measuring range (e.g., hundreds of meters) but with quite low resolution between scanning lines in a scan. The resultant point cloud measurements are sparsely distributed in a large 3D space, necessitating a large and dense map to register these sparse points. Moreover, the map needs to support efficient inquiry for correspondence search while being updated in real-time incorporating new measurements. Maintaining such a map is a very challenging task and very different from visual measurements, where an image measurement is of high resolution, so requiring only a sparse feature map because a feature point in the map can always find correspondence as long as it falls in the FoV.

In this work, we address these issues by two key novel techniques: incremental k-d tree and direct points registration. More specifically, our contributions are as follows: 1) We develop an incremental k-d tree data structure, {\it ikd-Tree}, to represent a large dense point cloud map efficiently. Besides efficient nearest neighbor search, the new data structure supports incremental map update (i.e., point insertion, on-tree downsampling, points delete) and dynamic re-balancing at minimal computation cost. These features make the {\it ikd-Tree} very suitable for LiDAR odometry and mapping application, leading to 100 $Hz$ odometry and mapping on computationally-constrained platforms such as an Intel i7-based micro-UAV onboard computer and even ARM-based processors. The \textit{ikd-Tree} data structure toolbox is open-sourced on Github\footnotemark[3]. 2) Allowed by the increased computation efficiency of \textit{ikd-Tree}, we directly register raw points to the map, which enables more accurate and reliable scan registration even with aggressive motion and in very cluttered environments. We term this raw points-based registration as {\it direct method} in analogy to visual SLAM \cite{forster2014svo}. The elimination of a hand-engineered feature extraction makes the system naturally applicable to different LiDAR sensors; 3) We integrate these two key techniques into a full tightly-coupled lidar-inertial odometry system FAST-LIO \cite{xu2020fastlio} we recently developed. The system uses an IMU to compensate each point's motion via a rigorous back-propagation step and estimates the system's full state via an on-manifold iterated Kalman filter. To further speed up the computation, a new and mathematically equivalent formula of computing the Kalman gain is used to reduce the computation complexity to the dimension of the state (as opposed to measurements). The new system is termed as FAST-LIO2 and is open-sourced at Github\footnotemark[2] to benefit the community; 4) We conduct various experiments to evaluate the effectiveness of the developed {\it ikd-Tree}, the direct point registration, and the overall system. Experiments on 18 sequences of various sizes show that \textit{ikd-Tree} achieves superior performance against existing dynamic data structures (octree, R$^\ast$-tree, \textit{nanoflann} k-d tree) in the application of LiDAR odometry and mapping. Exhaustive benchmark comparison on 19 sequences from various open LiDAR datasets shows that FAST-LIO2 achieves consistently higher accuracy at a much lower computation load than other state-of-the-art LiDAR-inertial navigation systems. We finally show the effectiveness of FAST-LIO2 on challenging real-world data collected by emerging solid-state LiDARs with very small FoV, including aggressive motion (e.g., rotation speed up to 1000 $deg/s$) and structure-less environments.

The remaining paper is organized as follows: In Section. \ref{sec:related_work}, we discuss relevant research works. We give an overview of the complete system pipeline and the details of each key components in Section. \ref{sec:overview}, \ref{sec:stat_est} and \ref{sec:mapping}, respectively. The benchmark comparison on open datasets are presented in Section. \ref{sec:benchmark} and the real-world experiments are reported in Section. \ref{sec:experiment}, followed by conclusions in Section. \ref{sec:conclusion}.

\section{Related Works}\label{sec:related_work}
\subsection{LiDAR(-Inertial) Odometry}
Existing works on 3D LiDAR SLAM typically inherit the LOAM structure proposed in \cite{zhang2014loam}. It consists of three main modules: feature extraction, {\it odometry}, and {\it mapping}. In order to reduce the computation load, a new LiDAR scan first goes through feature points (i.e., edge and plane) extraction based on the local smoothness. Then the {\it odometry} module (scan-to-scan) matches feature points from two consecutive scans to obtain a rough yet real-time (e.g., $10Hz$) LiDAR pose odometry. With the odometry, multiple scans are combined into a sweep which is then registered and merged to a global map (i.e., {\it mapping}). In this process, the map points are used to build a k-d tree which enables a very efficient $k$-nearest neighbor search ($k$NN search). Then, the point cloud registration is achieved by the Iterative Closest Point (ICP)~\cite{sharp2002icp,low2004linear,segal2009generalizedicp} method wherein each iteration, several closest points in the map form a plane or edge where a target point belongs to. In order to lower the time for k-d tree building, the map points are downsampled at a prescribed resolution. The optimized mapping process is typically performed at a much low rate (1-2$Hz$).

Subsequent LiDAR odometry works keep a framework similar to LOAM. For example, Lego-LOAM~\cite{shan2018lego} introduces a ground point segmentation to decrease the computation and a loop closure module to reduce the long-term drift. Furthermore, LOAM-Livox~\cite{lin2019loam_livox} adopts the LOAM to an emerging solid-state LiDAR. In order to deal with the small FoV and non-repetitive scanning, where the features points from two consecutive scans have very few correspondences, the {\it odometry} of LOAM-Livox is obtained by directly registering a new scan to the global map. Such a direct scan to map registration increases odometry accuracy at the cost of increased computation for building a k-d tree of the updated map points at every step.

Incorporating an IMU can considerably increase the accuracy and robustness of LiDAR odometry by providing a good initial pose required by ICP. Moreover, the high-rate IMU measurements can effectively compensate for the motion distortion in a LiDAR scan. LION~\cite{tagliabue2021lion} is a loosely-coupled LiDAR inertial SLAM method that keeps the scan-to-scan registration of LOAM and introduces an observability awareness check into the {\it odometry} to lower the point number and hence save the computation. More tightly-coupled LiDAR-inertial fusion works \cite{li2021towards, ye2019tightly, T2020liosam, qin2019lins} perform {\it odometry} in a small size local map consisting of a fixed number of recent LiDAR scans (or keyframes). Compared to scan-to-scan registration, the scan to local map registration is usually more accurate by using more recent information. More specifically, LIOM~\cite{ye2019tightly} presents a tightly-coupled LiDAR inertial fusion method where the IMU preintegrations are introduced into the {\it odometry}. LILI-OM \cite{li2021towards} develops a new feature extraction method for non-repetitive scanning LiDAR and performs scan registration in a small map consisting of 20 recent LiDAR scans for the {\it odometry}. The {\it odometry} of LIO-SAM~\cite{T2020liosam} requires a 9-axis IMU to produce attitude measurement as the prior of scan registration within a small local map. LINS~\cite{qin2019lins} introduces a tightly-coupled iterated Kalman filter and robocentric formula into the LiDAR pose optimization in the {\it odometry}. Since the local map in the above works is usually small to obtain real-time performance, the odometry drifts quickly, necessitating a low-rate {\it mapping} process, such as map refining (LINS~\cite{qin2019lins}), sliding window joint optimization (LILI-OM\cite{li2021towards} and LIOM~\cite{ye2019tightly}) and factor graph smoothing\cite{kaess2012isam2} (LIO-SAM~\cite{T2020liosam}). Compared to the above methods, FAST-LIO~\cite{xu2020fastlio} introduces a formal back-propagation that precisely considers the sampling time of every single point in a scan and compensates the motion distortion via a rigorous kinematic model driven by IMU measurements. Furthermore, a new Kalman gain formula is used to reduce the computation complexity from the dimension of the measurements to the dimension of the state. The new formula is proved to be mathematically equivalent to the conventional one but reduces the computation by several orders of magnitude. The considerably increased computation efficiency allows a direct and real-time scan to map registration in {\it odometry} and update the map (i.e., {\it mapping}) at every step. The timely mapping of all recent scan points leads to increased odometry accuracy. However, to prevent the growing time of building a k-d tree of the map, the system can only work in small environments (e.g., hundreds of meters). 

FAST-LIO2 builds on FAST-LIO \cite{xu2020fastlio} hence inheriting the tightly-coupled fusion framework, especially the back-propagation resolving motion distortion and fast Kalman gain computation boosting the efficiency. To systematically address the growing computation issue, we propose a new data structure {\it ikd-Tree} which supports incremental map update at every step and efficient $k$NN inquiries. Benefiting from the drastically decreased computation load, the {\it odometry} is performed by directly registering raw LiDAR points to the map, such that it improves accuracy and robustness of odometry and mapping, especially when a new scan contains no prominent features (e.g., due to small FoV and/or structure-less environments). Compared to the above tightly-coupled LiDAR-inertial methods, which all use feature points, our method is more lightweight and achieves increased mapping rate and odometry accuracy, and eliminates the need for parameter tuning for feature extraction. 

The idea of directly registering raw points in our work has been explored in LION \cite{tagliabue2021lion}, which is however a loosely-coupled method as reviewed above. This idea is also very similar to the generalized-ICP (G-ICP) proposed in~\cite{segal2009generalizedicp}, where a point is registered to a small local plane in the map. This ultimately assumes that the environment is smooth and hence can be viewed as a plane locally. However, the computation load of generalized-ICP is usually large~\cite{koide2020voxelized}. Other works based on Normal Distribution Transformation (NDT)~\cite{biber2003normal,magnusson2009three,magnusson2009evaluation} also register raw points, but NDT has lower stability compared to ICP and may diverge in some scenes~\cite{magnusson2009evaluation}.

\subsection{Dynamic Data Structure in Mapping}
In order to achieve real-time mapping, a dynamic data structure is required to support both incremental updates and $k$NN search with high efficiency. Generally, the $k$NN search problem can be solved by building spatial indices for data points, which can be divided into two categories: partitioning the data and splitting the space. A well-known instance to partition the data is R-tree\cite{guttman1984rtree} which clusters the data into potential overlapped axis-aligned cuboids based on data proximity in space. Various R-trees splits the nodes by linear, quadratic, and exponential complexities, all supporting nearest neighbor search and point-wise updating (insertion, delete, and re-insertion). Furthermore, R-trees also support searching target data points in a given search area or satisfying a given condition. Another version of R-trees is R$^\ast$-tree which outperforms the original ones\cite{beckmann1990rstar}. The R$^\ast$-tree handles insertion by minimum overlap criteria and applies a forced re-insertion principle for the node splitting algorithm. 

Octree \cite{meagher1982octree} and k-dimensional tree (k-d tree)\cite{bentley1975kdtree} are two well-known types of data structures to split the space for $k$NN search. The octree organizes 3-D point clouds by splitting the space equally into eight axis-aligned cubes recursively. The subdivision of a cube stops when the cube is empty, or a stopping rule (e.g., minimal resolution or minimal point number) is met. New points are inserted to leaf nodes on the octree while a further subdivision is applied if necessary. The octree supports both $k$NN search and box-wise search, which returns data points in a given axis-aligned cuboid. 

The k-d tree is a binary tree whose nodes represent an axis-aligned hyperplane to split the space into two parts. In the standard construction rule, the splitting node is chosen as the median point along the longest dimension to achieve a compact space division\cite{friedman1977search}. When considering the data characteristics of low dimensionality and storage on main memory in mapping, comparative studies show that k-d trees achieve the best performance in $k$NN problem\cite{vermeulen2017comparative,libnabo2012}. However, inserting new points to and deleting old points from a k-d tree deteriorates the tree's balance property; thus, re-building is required to re-balance the tree. Mapping methods using k-d tree libraries, such as ANN\cite{arya1998ann}, \textit{libnabo}\cite{libnabo2012} and FLANN\cite{muja2009flannfast}, fully re-build the k-d trees to update the map, which results in considerable computation. Though hardware-based methods to re-build k-d trees have been thoroughly investigated in 3D graphic applications\cite{huntsinglecore,popov2006sah,shevtsovmulticore,zhougpu}, the proposed methods rely heavily on the computational sources which are usually limited on onboard computers for robotic applications. Instead of re-building the tree in full scale, Galperin \textit{et al.} proposed a scapegoat k-d tree where re-building is applied partially on the unbalanced sub-trees to maintain a loose balance property of the entire tree\cite{scapegoat}. Another approach to enable incremental operations is maintaining a set of k-d trees in a logarithmic method similar to \cite{bentley1980decomposable,overmars1987design} and re-building a carefully chosen sub-set. The Bkd-tree maintains a k-d tree $\mathcal{T}_0$ with maximal size $M$ in the main memory and a set of k-d trees $\mathcal{T}_i$ on the external memory where the $i$-th tree has a size of $2^{(i-1)}M$\cite{procopiuc2003bkd}. When the tree $\mathcal{T}_0$ is full, the points are extracted from $\mathcal{T}_0$ to $\mathcal{T}_{k-1}$ and inserted into the first empty tree $\mathcal{T}_k$. The state-of-the-art implementation \textit{nanoflann} k-d tree leverages the logarithmic structure for incremental updates, whereas lazy labels only mark the deleted points without removing them from the trees (hence memory) \cite{blanco2014nanoflann}. 

We propose a dynamic data structure based on the scapegoat k-d tree\cite{scapegoat}, named incremental k-d tree (\textit{ikd-Tree}), to achieve real-time mapping. Our \textit{ikd-Tree} supports point-wise insertion with on-tree downsampling which is a common requirement in mapping, whereas downsampling must be done outside before inserting new points into other dynamic data structures\cite{meagher1982octree,beckmann1990rstar,blanco2014nanoflann}. When it is required to remove unnecessary points in a given area with regular shapes (e.g., cuboids), the existing implementations of R-trees and octrees search the points within the given space and delete them one by one while common k-d trees use a radius search to obtain point indices. Compared to such an indirect and inefficient method, the \textit{ikd-Tree} deletes the points in given axis-aligned cuboids directly by maintaining range information and lazy labels. Points labeled as ``deleted'' are removed during the re-building process. Furthermore, though incremental updates are available after applying the partial re-balancing methods as the scapegoat k-d tree\cite{scapegoat} and \textit{nanoflann} k-d tree\cite{blanco2014nanoflann}, the mapping methods using k-d trees suffers from intermittent delay when re-building on a large number of points. In order to overcome this, the significant delay in \textit{ikd-Tree} is avoided by parallel re-building while the real-time ability and accuracy in the main thread are guaranteed.

\section{System Overview}\label{sec:overview}
\begin{figure*}[t]
    \begin{center}
        {\includegraphics[width=2.05\columnwidth]{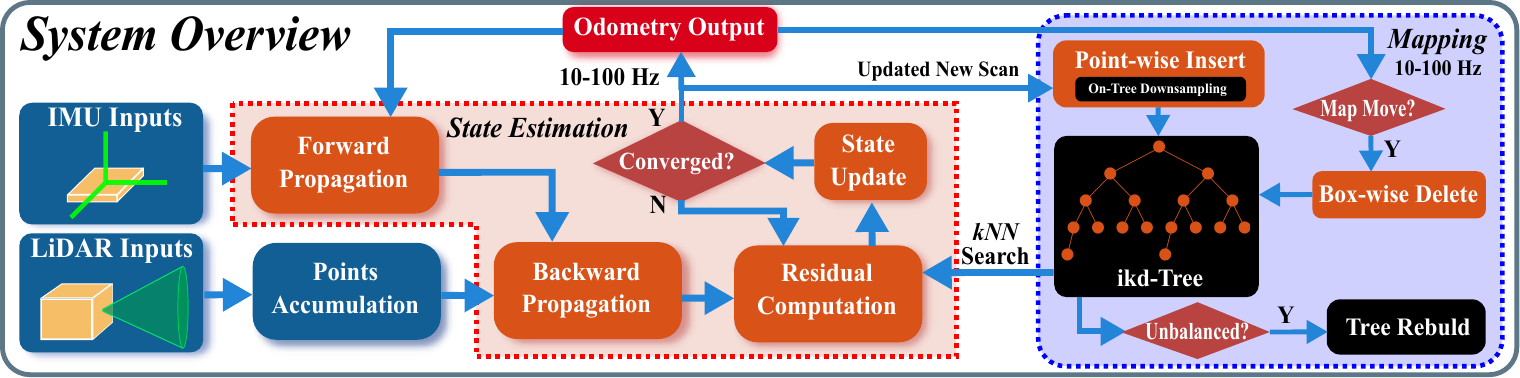}}
    \end{center}
    \vspace{-0.4cm}
    \caption{\label{fig:workflow}System overview of FAST-LIO2.}
    \vspace{-0.2cm}
\end{figure*}
The pipeline of FAST-LIO2 is shown in Fig.~\ref{fig:workflow}. The sequentially sampled LiDAR raw points are first accumulated over a period between $10ms$ (for $100Hz$ update) and $100ms$ (for $10Hz$ update). The accumulated point cloud is called a scan. In order to perform state estimation, points in a new scan are registered to map points (i.e., {\it odometry}) maintained in a large local map via a tightly-coupled iterated Kalman filter framework (big dashed block in red, see Section. \ref{sec:stat_est}). Global map points in the large local map are organized by an incremental k-d tree structure {\it ikd-Tree} (big dashed block in blue, see Section. \ref{sec:mapping}). If the FoV range of current LiDAR crosses the map border, the historical points in the furthest map area to the LiDAR pose will be deleted from {\it ikd-Tree}. As a result, the {\it ikd-Tree} tracks all map points in a large cube area with a certain length (referred to as ``map size" in this paper) and is used to compute the residual in the state estimation module. The optimized pose finally registers points in the new scan to the global frame and merges them into the map by inserting to the {\it ikd-Tree} at the rate of odometry (i.e., {\it mapping}).

\section{State Estimation}\label{sec:stat_est}
The state estimation of FAST-LIO2 is a tightly-coupled iterated Kalman filter inherited from FAST-LIO\cite{xu2020fastlio} but further incorporates the online calibration of LiDAR-IMU extrinsic parameters. Here we briefly explain the essential formulations and workflow of the filter and refer readers to \cite{xu2020fastlio} for more details.

\subsection{Kinematic Model}
We first derive the system model, which consists of a state transition model and a measurement model. 
\subsubsection{State Transition Model}
\
\par
Take the first IMU frame (denoted as $I$) as the global frame (denoted as $G$) and denote $^{I}\mathbf T_{L}=\left(^I\mathbf R_{L},\;^I\mathbf p_L\right)$ the unknown extrinsic between LiDAR and IMU, the kinematic model is:
\begin{equation}~\label{e:kine_model_s}
\begin{aligned}
^G\dot{\mathbf R}_I&={}^G{\mathbf R}_I \lfloor \bm \omega_m -  \mathbf b_{\bm \omega} - \mathbf n_{\bm \omega} \rfloor_\wedge,\ ^G\dot{\mathbf p}_I={}^G\mathbf v_I,\\
^G\dot{\mathbf v}_I&={}^G{\mathbf R}_I \left( \mathbf a_m - \mathbf b_{\mathbf a} - \mathbf n_{\mathbf a} \right) + {}^G\mathbf g\\
\dot{\mathbf b}_{\bm \omega} &=\mathbf n_{\mathbf b\bm \omega}, \ \dot{\mathbf b}_{\mathbf a}=\mathbf n_{\mathbf b\mathbf a}, \\
^G\dot{\mathbf g} &= \mathbf 0,\ ^{I}\dot{\mathbf R}_{L}=\mathbf 0, \ ^{I}\dot{\mathbf p}_{L}=\mathbf 0
\end{aligned}
\end{equation}
where $^G\mathbf p_I$, $^G{\mathbf R}_I$ denote the IMU position and attitude in the global frame, $^G\mathbf g$ is the gravity vector in the global frame, $\mathbf a_m$ and $\bm \omega_m$ are IMU measurements, $\mathbf n_{\mathbf a}$ and $\mathbf n_{\bm \omega}$ denote the measurement noise of $\mathbf a_m$ and $\bm \omega_m$, $\mathbf b_{\mathbf a}$ and $\mathbf b_{\boldsymbol{\omega}}$ are the IMU biases modeled as random walk process driven by $\mathbf n_{\mathbf b \mathbf a}$ and $\mathbf n_{\mathbf b\bm \omega}$, and the notation $\lfloor \mathbf a \rfloor_{\wedge}$ denotes the skew-symmetric cross product matrix of vector $\mathbf a \in \mathbb{R}^3$.

Denote $i$ the index of IMU measurements. Based on the $\boxplus$ operation defined in~\cite{xu2020fastlio}, the continuous kinematic model (\ref{e:kine_model_s}) can be discretized at the IMU sampling period $\Delta t$\cite{he2021embedding}:
\begin{equation}\label{e:kine_model_discrete}
    \begin{aligned}
    \mathbf x_{i+1} &= \mathbf x_{i} \boxplus \left( \Delta t \mathbf f(\mathbf x_i, \mathbf u_i, \mathbf w_i)   \right)
    \end{aligned}
\end{equation}
where the function $\mathbf f$, state $\mathbf x$, input $\mathbf u$ and noise $\mathbf w$ are defined as below:
\begin{equation}
\begin{aligned}\notag
    \mathcal{M} &\triangleq SO(3) \times \mathbb{R}^{15} \times SO(3) \times \mathbb{R}^{3};\ \text{dim}(\mathcal{M}) = 24 \\
    \mathbf x &\triangleq \!\begin{bmatrix}^G{\mathbf R}_I^T\!&\!^G\mathbf p_I^T\!\!&\!^G\mathbf v_I^T\!&\!\mathbf b_{\bm \omega}^T\!&\!\!\mathbf b_{\mathbf a}^T\!\!&\!\!^G\mathbf g^T\!&\!\!^{I}\mathbf R_{L}^T \!&\!^{I}\mathbf p_{L}^T \!\end{bmatrix}^T\!\in\!\! \mathcal M\\
    \mathbf u &\triangleq \begin{bmatrix} {\bm \omega}^T_m & {\mathbf a}^T_m\end{bmatrix}^T,\
    \mathbf w \triangleq \begin{bmatrix}\mathbf n_{\bm \omega}^T&\mathbf n_{\mathbf a}^T&\mathbf n_{\mathbf b\bm \omega}^T&\mathbf n_{\mathbf b\mathbf a}^T\end{bmatrix}^T \\
    \mathbf f&\left( \mathbf x, \mathbf u, \mathbf w\right)\! =\!\! \begin{bmatrix} \bm \omega_{m} - \mathbf b_{\bm \omega} - \mathbf n_{\bm \omega} \\
    \!{}^G\mathbf v_{I}\!+\!\displaystyle{\frac{1}{2}}\!\left( {}^G{\mathbf R}_{I}\left(\mathbf a_{m}\!-\!\mathbf b_{\mathbf a}\!-\! \mathbf n_{\mathbf a}\right)\! +\!\! {}^G\mathbf g\right)\!\Delta t \\
    {}^G{\mathbf R}_{I}\left(\mathbf a_{m}\!-\!\mathbf b_{\mathbf a}\!-\! \mathbf n_{\mathbf a}\right)\! +\!\! {}^G\mathbf g  \\
    \mathbf n_{\mathbf b \bm \omega} \\
    \mathbf n_{\mathbf b \mathbf a} \\
    \mathbf 0_{3\times1} \\ 
    \mathbf 0_{3\times1} \\
    \mathbf 0_{3\times1} \end{bmatrix}\!\!\in\! \mathbb{R}^{24}
\end{aligned}
\end{equation}

\subsubsection{Measurement Model}
LiDAR typically samples points one after another. The resultant points are therefore sampled at different poses when the LiDAR undergoes continuous motion.  To correct this in-scan motion, we employ the back-propagation proposed in \cite{xu2020fastlio}, which estimates the LiDAR pose of each point in the scan with respect to the pose at the scan end time based on IMU measurements. The estimated relative pose enables us to project all points to the scan end-time based on the exact sampling time of each individual point in the scan. As a result, points in the scan can be viewed as all sampled simultaneously at the scan end-time. 

Denote $k$ the index of LiDAR scans and $\{{}^{L}\mathbf p_j, j = 1, \cdots, m\}$ the points in the $k$-th scan which are sampled at the local LiDAR coordinate frame $L$ at the scan end-time. Due to the LiDAR measurement noise, each measured point ${}^{L}\mathbf p_j$ is typically contaminated by a noise ${}^{L} \mathbf n_{j}$ consisting of the ranging and beam-directing noise. Removing this noise leads to the true point location in the local LiDAR coordinate frame ${}^{L} \mathbf p_{j}^{\text{gt}}$:

\begin{equation}\label{e:gt_meas}
    {}^{L} \mathbf p_{j}^{\text{gt}} = {}^{L} \mathbf p_{j} + {}^{L} \mathbf n_{j}.
\end{equation}

\begin{figure}[t]
    \begin{center}
        {\includegraphics[width=0.9\columnwidth]{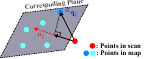}}
    \end{center}
    \caption{\label{fig:meas_model}The measurement model.}
\end{figure}

This true point, after projecting to the global frame using the corresponding LiDAR pose ${}^G{\mathbf T}_{I_k}=\left(^G\mathbf R_{I_k},\;^G\mathbf p_{I_k}\right)$ and extrinsic ${}^I{\mathbf T}_{L}$, should lie exactly on a local small plane patch in the map, i.e.,
\begin{equation}
\begin{split}~\label{e:meas_model}
    \mathbf 0 = {}^G \mathbf u_j^T \left( {}^G {\mathbf T}_{I_k} {}^{I}{\mathbf T}_{L} \left( {}^{L}{\mathbf p}_{j} + {}^{L}{\mathbf n}_{j} \right) - {}^G\mathbf q_j \right)
\end{split}
\end{equation}
where ${}^G \mathbf u_j$ is the normal vector of the corresponding plane and ${}^G\mathbf q_j$ is a point lying on the plane (see Fig. \ref{fig:meas_model}). It should be noted that the ${}^G{\mathbf T}_{I_k}$ and ${}^I{\mathbf T}_{L_k}$ are all contained in the state vector $\mathbf x_k$. The measurement contributed by the $j$-th point measurement ${}^{L}{\mathbf p}_{j}$ can therefore be summarized from (\ref{e:meas_model}) to a more compact form as below:
\begin{equation}
\begin{split}~\label{e:meas_equ}
    \mathbf 0 = \mathbf h_j\!\left({\mathbf x}_k, {}^{L}{\mathbf p}_{j} + {}^{L}{\mathbf n}_j \right),
\end{split}
\end{equation}
which defines an implicit measurement model for the state vector $\mathbf x_k$. 
\subsection{Iterated Kalman Filter}
Based on the state model (\ref{e:kine_model_discrete}) and measurement model (\ref{e:meas_equ}) formulated on manifold $\mathcal{M} \triangleq SO(3) \times \mathbb{R}^{15} \times SO(3) \times \mathbb{R}^{3} $, we employ an iterated Kalman filter directly operating on the manifold $\mathcal{M}$ following the procedures in \cite{he2021embedding} and \cite{xu2020fastlio}. It consists of two key steps: propagation upon each IMU measurement and iterated update upon each LiDAR scan, both step estimates the state naturally on the manifold $\mathcal{M}$ thus avoiding any re-normalization. Since the IMU measurements are typically at a higher frequency than a LiDAR scan (e.g., $200Hz$ for IMU measurement and $10Hz \sim 100Hz$ for LiDAR scans), multiple propagation steps are usually performed before an update. 

\subsubsection{Propagation}\label{sec:forward_prog}
Assume the optimal state estimate after fusing the last (i.e., $k-1$-th) LiDAR scan is $\bar{\mathbf {x}}_{k-1}$ with covariance matrix $\bar{\mathbf P}_{k-1}$. The forward propagation is performed upon the arrival of an IMU measurement. More specifically, the state and covariance are propagated following (\ref{e:kine_model_discrete}) by setting the process noise $\mathbf w_i$ to zero:
\begin{equation}\label{e:state_prop}
    \begin{aligned}
    \widehat{ \mathbf  x}_{i+1} &= \widehat{ \mathbf  x}_{i} \boxplus \left( \Delta t \mathbf f (\widehat{\mathbf x}_i, \mathbf u_i, \mathbf 0) \right); \ \widehat{\mathbf x}_{0} = \bar{\mathbf x}_{k-1},\\
    \widehat{\mathbf P}_{i+1} &= \mathbf F_{\widetilde{\mathbf x}_i}\widehat{\mathbf P}_{i}\mathbf F_{\widetilde{\mathbf x}_i}^T + \mathbf F_{\mathbf w_i} \mathbf Q_i \mathbf F_{\mathbf w_i}^T ; \ \widehat{\mathbf P}_{0} = \bar{\mathbf P}_{k-1},
    \end{aligned}
\end{equation}
where $\mathbf Q_i $ is the covariance of the noise $\mathbf w_i $ and the matrix $\mathbf F_{\widetilde{\mathbf x}_i}$ and $\mathbf F_{\mathbf w_i}$ are computed as below (see more abstract derivation in \cite{he2021embedding} and more concrete derivation in \cite{xu2020fastlio}):
\begin{equation}
\begin{split}~\label{e:noise_cov}
\mathbf F_{\widetilde{\mathbf x}_i} &=\textstyle \frac{\partial \left({\mathbf x}_{i+1} \boxminus \widehat{\mathbf x}_{i+1}\right)}{\partial \widetilde{\mathbf x}_i}|_{\begin{subarray}{l}\widetilde{\mathbf x}_i = \mathbf 0 ,\ \mathbf w_i = \mathbf 0 \end{subarray}} \\
\mathbf F_{\mathbf w_i} &=\left. \textstyle \frac{\partial \left({\mathbf x}_{i+1} \boxminus \widehat{\mathbf x}_{i+1}\right)}{\partial \mathbf w_i}\right|_{\begin{subarray}{l}\widetilde{\mathbf x}_i = \mathbf 0 ,\ \mathbf w_i = \mathbf 0 \end{subarray}} \\
\end{split}
\end{equation}

The forward propagation continues until reaching the end time of a new (i.e., $k$-th) scan where the propagated state and covariance are denoted as $\widehat{\mathbf{x}}_{k}, \widehat{\mathbf P}_{k}$.

\subsubsection{Residual Computation}\label{sec:residual}
Assume the estimate of state $\mathbf x_k$ at the current iterate update (see Section.  \ref{sec:iter-update}) is $\widehat{\mathbf x}_k^\kappa$, when $\kappa = 0$ (i.e., before the first iteration), $\widehat{\mathbf x}_k^\kappa = \widehat{\mathbf x}_k$, the predicted state from the propagation in (\ref{e:state_prop}). Then, we project each measured LiDAR point ${}^{L} \mathbf p_{j}$ to the global frame ${}^{G} \widehat{\mathbf p}_{j} = {}^G\widehat{\mathbf T}^\kappa_{I_k} \! {}^{I}\widehat{\mathbf T}^\kappa_{L_k}\!{}^{L}{\mathbf p}_{j} $ and search its nearest 5 points in the map represented by {\it ikd-Tree} (see Section. \ref{sec:map_manage}). The found nearest neighbouring points are then used to fit a local small plane patch with normal vector ${}^G  \mathbf u_j$ and centroid ${}^G  \mathbf q_j$ that were used in the measurement model (see (\ref{e:meas_model}) and (\ref{e:meas_equ})).  Moreover, approximating the measurement equation (\ref{e:meas_equ}) by its first order approximation made at $\widehat{\mathbf x}_k^\kappa$ leads to
\vspace{-0.1cm}
\begin{equation}\label{e:meas_jocob}
\begin{split}
    \mathbf 0 &= \mathbf h_j \left( \mathbf x_k, {}^{L}{\mathbf n}_{j} \right) \simeq \mathbf h_j \left( \widehat{\mathbf x}_k^\kappa, \mathbf 0 \right) + \mathbf H_j^\kappa  \widetilde{\mathbf x}_{k}^\kappa + \mathbf v_j\\&=  \mathbf z_j^\kappa + \mathbf H_j^\kappa  \widetilde{\mathbf x}_{k}^\kappa + \mathbf v_j
\end{split}
\end{equation}
where $\widetilde{\mathbf x}_{k}^\kappa=\mathbf x_k \boxminus \widehat{\mathbf x}_k^\kappa$ (or equivalently $\mathbf x_k = \widehat{\mathbf x}_k^\kappa \boxplus \widetilde{\mathbf x}_{k}^\kappa$), $\mathbf H_j^\kappa$ is the Jacobin matrix of {$\mathbf h_j\! \left(  \widehat{\mathbf x}_k^\kappa \boxplus \widetilde{\mathbf x}_{k}^\kappa , {}^{L}{\mathbf n}_{j} \right)$} with respect to $\widetilde{\mathbf x}_{k}^\kappa$, evaluated at zero, $\mathbf v_j \in \mathcal{N}(\mathbf 0, \mathbf R_j)$ is due to the raw measurement noise ${}^{L}{\mathbf n}_{j}$, and $\mathbf z_j^{\kappa}$ is called the residual:
\begin{equation}\label{e:res_compute}
\begin{split}
\mathbf z_j^\kappa = \mathbf h_j \left( \widehat{\mathbf x}_k^\kappa, \mathbf 0 \right) = \mathbf u^T_j \left({}^G\widehat{\mathbf T}^\kappa_{I_k} \! {}^{I}\widehat{\mathbf T}^\kappa_{L_k}\!{}^{L}{\mathbf p}_{j} \! - {}^G\mathbf q_j \right)
\end{split}
\end{equation}

\subsubsection{Iterated Update}\label{sec:iter-update}
The propagated state $\widehat{\mathbf x}_{k}$ and covariance $\widehat{\mathbf P}_{k}$ from Section. \ref{sec:forward_prog} impose a prior Gaussian distribution for the unknown state $\mathbf x_k$. More specifically, $\widehat{\mathbf P}_{k}$ represents the covariance of the following error state: 
\vspace{-0.1cm}
\begin{equation} \label{eq:prior}
\begin{aligned}
    \mathbf x_k \boxminus \widehat{\mathbf x}_k &= \left( \widehat{\mathbf x}_{k}^\kappa \boxplus \widetilde{\mathbf x}_{k}^\kappa \right) \boxminus \widehat{\mathbf x}_k = \widehat{\mathbf x}_{k}^\kappa \boxminus \widehat{\mathbf x}_{k} + \mathbf J^\kappa \widetilde{\mathbf x}_{k}^\kappa \\
    & \sim \mathcal{N}(\mathbf 0, \widehat{\mathbf P}_{k})
\end{aligned}
\end{equation}
where $\mathbf J^\kappa$ is the partial differentiation of $\left( \widehat{\mathbf x}_{k}^\kappa \boxplus \widetilde{\mathbf x}_{k}^\kappa \right) \boxminus \widehat{\mathbf x}_k$ with respect to $\widetilde{\mathbf x}_{k}^\kappa$ evaluated at zero: 
\vspace{-0.1cm}
\begin{equation}
\begin{split}~\label{e:L_compute}
    \mathbf J^\kappa \!\!=\!\!\begin{bmatrix}\begin{small}\mathbf A\!\left(\delta{}^G\!\bm{\theta}_{I_k}\right)\!\!^{-T} \!\!\end{small}& \mathbf 0_{3\times 15} & \mathbf 0_{3\times 3} & \mathbf 0_{3\times 3} \\
\mathbf 0_{15\times 3} & \mathbf I_{15\times 15} & \mathbf 0_{3\times 3} & \mathbf 0_{3\times 3}  \\
\mathbf 0_{3\times 3} & \mathbf 0_{3\times 15} & \mathbf A\!\left(\delta{}^I\!\bm{\theta}_{L_k}\right)\!\!^{-T} \!\!& \mathbf 0_{3\times 3}\\
\mathbf 0_{3\times 3} & \mathbf 0_{3\times 15} & \mathbf 0_{3\times 3} & \mathbf I_{3\times 3}  \\\end{bmatrix}\\
\end{split}
\end{equation}
where $\mathbf A(\cdot)^{-1}$ is defined in~\cite{he2021embedding, xu2020fastlio}, $\delta{}^G\!\bm{\theta}_{I_k} \!\!=\!\! ^G\widehat{\mathbf R}_{I_k}^\kappa \boxminus {}^G\widehat{\mathbf R}_{I_k}$ and $\delta{}^I\!\bm{\theta}_{L_k} = {}^I\widehat{\mathbf R}_{L_k}^\kappa\boxminus {}^I\widehat{\mathbf R}_{L_k}$ is the error states of IMU's attitude and rotational extrinsic, respectively. For the first iteration, $\widehat{\mathbf x}_{k}^\kappa \!=\! \widehat{\mathbf x}_{k}$, then $\mathbf J^\kappa \!=\! \mathbf I$.

Besides the prior distribution, we also have a distribution of the state due to the measurement (\ref{e:meas_jocob}):
\begin{equation}\label{eq:posteriori}
    -\mathbf v_j = \mathbf z_j^\kappa + \mathbf H_j^\kappa  \widetilde{\mathbf x}_{k}^\kappa \sim \mathcal{N}(\mathbf 0, \mathbf R_j)
\end{equation}

Combining the prior distribution in (\ref{eq:prior}) with the measurement model from (\ref{eq:posteriori}) yields the posteriori distribution of the state $\mathbf x_k$ equivalently represented by $\widetilde{\mathbf x}_{k}^\kappa$ and its maximum a-posteriori estimate (MAP):
\begin{equation}
\begin{split}~\label{e:error_states_solution}
    \min_{\widetilde{\mathbf x}_{k}^\kappa} \left( \| \mathbf x_k \boxminus \widehat{\mathbf x}_k \|^2_{ \widehat{\mathbf P}_k} + \sum\nolimits_{j=1}^{m} \| \mathbf z_j^\kappa + \mathbf H_j^\kappa \widetilde{\mathbf x}_{k}^\kappa \|^2_{\mathbf R_j} \right)
\end{split}
\end{equation}
where $\| \mathbf x \|_{\mathbf M}^2 = \mathbf x^T \mathbf M^{-1} \mathbf x$. This MAP problem can be solved by iterated Kalman filter as below (to simplify the notation, let $\mathbf H\! = \![ \mathbf H_1^{\kappa^T}, \cdots, \mathbf H_m^{\kappa^T}]^T$, $\mathbf R\! =\! \text{diag}\left(\mathbf R_1, \cdots \mathbf R_m \right)$,$\mathbf P\! =\!\left(\mathbf J^{\kappa}\right)^{-1} \widehat{\mathbf P}_{k} (\mathbf J^{\kappa})^{-T} $, and $\mathbf z_k^\kappa = \left[ \mathbf z_1^{\kappa^T}, \cdots, \mathbf z_m^{\kappa^T} \right]^T$):
\vspace{-0.1cm}
\begin{equation}
\begin{split}~\label{e:kalman_gain}
\mathbf K &= \left(\mathbf H^T {\mathbf R}^{-1} \mathbf H + {\mathbf P}^{-1} \right)^{-1}\mathbf H^T \mathbf R^{-1}, \\
\widehat{\mathbf x}_{k}^{\kappa+1} & = \widehat{\mathbf x}_{k}^{\kappa} \! \boxplus \!  \left( -\mathbf K  {\mathbf z}_k^\kappa  - (\mathbf I - \mathbf K \mathbf H ) (\mathbf J^\kappa)^{-1} \left( \widehat{\mathbf x}_{k}^{\kappa} \boxminus \widehat{\mathbf x}_{k} \right)  \right).
\end{split}
\end{equation}
Notice that the Kalman gain $\mathbf K$ computation needs to invert a matrix of the state dimension instead of the measurement dimension used in previous works. 

The above process repeats until convergence (i.e., $\| \widehat{\mathbf x}_{k}^{\kappa+1} \boxminus \widehat{\mathbf x}_{k}^\kappa\|\!<\! \epsilon$). After convergence, the optimal state and covariance estimates are:
\vspace{-0.1cm}
\begin{equation}
\begin{split}~\label{e:state_update}
\bar{\mathbf x}_{k} &= \widehat{\mathbf x}_{k}^{\kappa+1},\ \bar{\mathbf P}_k = \left( \mathbf I - \mathbf K \mathbf H \right) {\mathbf P}
\end{split}
\end{equation}

With the state update $\bar{\mathbf x}_k$, each LiDAR point (${}^{L} \mathbf p_{j}$) in the $k$-th scan is then transformed to the global frame via:
\begin{equation}
\begin{split}~\label{e:feat_global_map}
    {}^G \bar{\mathbf p}_{j} = {}^G \bar{\mathbf T}_{I_k} \! {}^{I}\bar{\mathbf T}_{L_k} {}^{L}{\mathbf p}_{j} ; \ j= 1, \cdots, m.
\end{split}
\end{equation}
The transformed LiDAR points $\{{}^G \bar{\mathbf p}_{j}\}$ are inserted to the map represented by {\it ikd-Tree} (see Section. \ref{sec:mapping}). Our state estimation is summarized in {\bf Algorithm \ref{alg:state_estimation}}.

\begin{algorithm}[t]
    \caption{State Estimation}
    \label{alg:state_estimation}
    \SetKwInOut{Input}{Input}\SetKwInOut{Output}{Output}\SetKwInOut{Start}{Start}\SetKwInOut{blank}{}
    \Input{Last output $\bar{\mathbf x}_{k-1}$ and $\bar{\mathbf P}_{k-1}$;\\
    LiDAR raw points in current scan;\\
    IMU inputs ($\mathbf a_{m}$, $\bm \omega_m$) during current scan.}
    Forward propagation to obtain state prediction $\widehat{\mathbf x}_k$ and its covariance $\widehat{\mathbf P}_k$ via (\ref{e:state_prop});\\
    Backward propagation to compensate motion \cite{xu2020fastlio};\\
    $\kappa = -1$, $\widehat{\mathbf x}_{k}^{\kappa = 0} = \widehat{\mathbf x}_{k}$; \\
        \Repeat{$\| \widehat{\mathbf x}_{k}^{\kappa+1} \boxminus \widehat{\mathbf x}_{k}^\kappa\|< \epsilon$}{
        $\kappa =\kappa+1$;\\
        Compute $\mathbf J^\kappa$ via (\ref{e:L_compute}) and $\mathbf P\! =\! (\mathbf J^{\kappa})^{-1} \widehat{\mathbf P}_{k} (\mathbf J^{\kappa})^{-T} $;\\
        Compute residual $\mathbf z_j^{\kappa}$ and Jacobin $\mathbf H_j^\kappa$ via (\ref{e:meas_jocob}) (\ref{e:res_compute});\\
        Compute the state update $\widehat{\mathbf x}_{k}^{\kappa+1}$ via (\ref{e:kalman_gain});\\}
    $\bar{\mathbf x}_{k} = \widehat{\mathbf x}_{k}^{\kappa+1}$; $\bar{\mathbf P}_{k} = \left( \mathbf I - \mathbf K \mathbf H \right) {\mathbf P}$; \\
    Obtain the transformed LiDAR points $\{{}^G \bar{\mathbf p}_{j}\}$ via (\ref{e:feat_global_map}). \\
    \Output{Current optimal estimate $\bar{\mathbf x}_{k}$ and $\bar{\mathbf P}_{k}$;\\
    The transformed LiDAR points $\{{}^G \bar{\mathbf p}_{j}\}$.}
\end{algorithm}

	\section{Mapping}\label{sec:mapping}
    	In this section, we describe how to incrementally maintain a map (i.e., insertion and delete) and perform $k$-nearest search on it by \textit{ikd-Tree}. In order to prove the time efficiency of \textit{ikd-Tree} theoretically, a complete analysis of time complexity is provided.
    	
    	\subsection{Map Management}\label{sec:map_manage}
    	The map points are organized into an \textit{ikd-Tree}, which dynamically grows by merging a new scan of point cloud at the odometry rate. To prevent the size of the map from going unbound, only map points in a large local region of length $L$ around the LiDAR current position are maintained on the \textit{ikd-Tree}. A 2D demonstration is shown in Fig. \ref{fig:localmap}. The map region is initialized as a cube with length $L$, which is centered at the initial LiDAR position $\mathbf{p}_0$. The detection area of LiDAR is assumed to be a detection ball centered at the LiDAR current position obtained from (\ref{e:state_update}). The radius of the detection ball is assumed to be $r = \gamma R$ where $R$ is the LiDAR FoV range, and $\gamma$ is a relaxation parameter larger than $1$. When the LiDAR moves to a new position $\mathbf{p'}$ where the detection ball touches the boundaries of the map, the map region is moved in a direction that increases the distance between the LiDAR detection area and the touching boundaries. The distance that the map region moves is set to a constant $d = (\gamma -1)R$. All points in the subtraction area between the new map region and the old one will be deleted from the \textit{ikd-Tree} by a box-wise delete operation detailed in \ref{subsec:increupdates}.

    	\begin{figure}
    	    \centering
    	    \includegraphics[width=1\columnwidth]{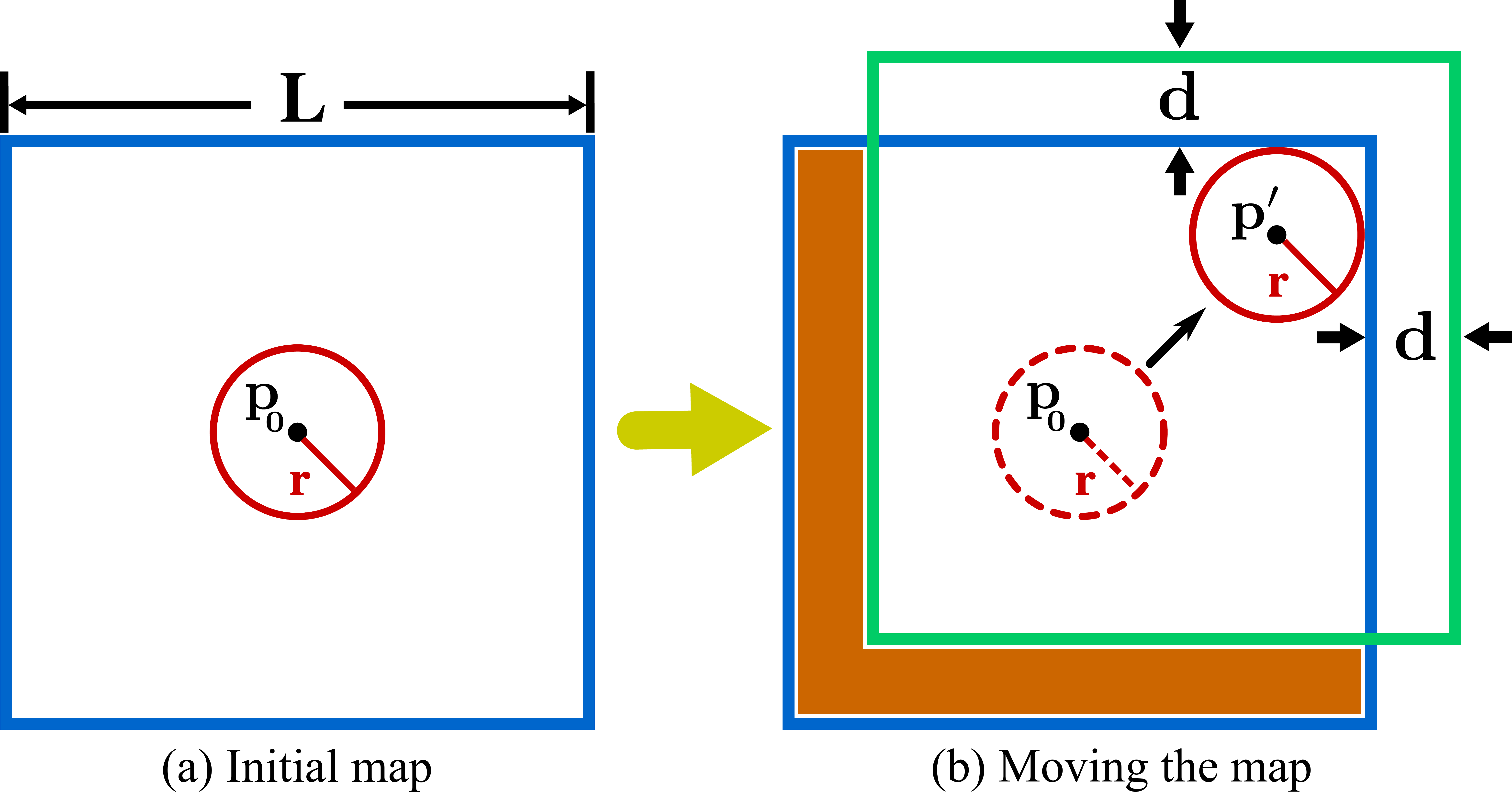}
    	    \caption{2D demonstration of map region management. In (a), the blue rectangle is the initial map region with length $L$. The red circle is the initial detection area centered at the initial LiDAR position $\mathbf{p}_0$. In (b), the detection area (dashed red circle) moves to a new position $\mathbf{p'}$(circle with solid red line) where the map boundaries are touched. The map region is moved to a new position (green rectangle) by distance $d$. The points in the subtraction area (orange area) are removed from the map (i.e., \textit{ikd-Tree}).}
    	    \label{fig:localmap}
    	\end{figure}
        
        \subsection{Tree Structure and Construction}\label{subsec:datastructure}
        \subsubsection{Data Structure}
        The \textit{ikd-Tree} is a binary search tree. The attributes of a tree node in \textit{ikd-Tree} is presented in \textbf{Data Structure}. Different from many existing implementations of k-d trees which  store a ``bucket'' of points only on leaf nodes \cite{arya1998ann,procopiuc2003bkd,libnabo2012,muja2009flannfast,blanco2014nanoflann}, our \textit{ikd-Tree} stores points on both leaf nodes and internal nodes to better support dynamic point insertion and tree re-balancing. Such storing mode has also shown to be more efficient in $k$NN search when a single k-d tree is used \cite{friedman1977search}, which is the case of our \textit{ikd-Tree}. Since a point corresponds to a single node on the \textit{ikd-Tree}, we will use points and nodes interchangeably. The point information (e.g., point coordinates, intensity) are stored in \texttt{point}. The attributes \texttt{leftchild} and \texttt{rightchild} are pointers to its left and right child node, respectively. The division axis to split the space is recorded in \texttt{axis}. The number of tree nodes, including both valid and invalid nodes, of the (sub-)tree rooted at the current node is maintained in attribute \texttt{treesize}. When points are removed from the map, the nodes are not deleted from the tree immediately, but only setting the boolean variable \texttt{deleted} to be true (see Section. \ref{subsubsec:boxdelete} for details). If the entire (sub-)tree rooted at the current node is removed, \texttt{treedeleted} is set to true. The number of points deleted from the (sub-)tree is summed up into attribute \texttt{invalidnum}. The attribute \texttt{range} records the range information of the points on the (sub-)tree, which is interpreted as a circumscribed axis-aligned cuboid containing all the points. The circumscribed cuboid is represented by its two diagonal vertices with minimal and maximal coordinates on each dimension, respectively.
    	\begin{algorithm}
        \SetAlgoLined
        \NoCaptionOfAlgo
        \SetKwProg{kdtreenode}{Struct}{:}{end}
        \kdtreenode{\text{TreeNode}}{
        PointType \texttt{point};\\ 
        TreeNode * \texttt{leftchild}, * \texttt{rightchild};\\
        int \texttt{axis};\\
        int \texttt{treesize}, \texttt{invalidnum};\\
        bool \texttt{deleted}, \texttt{treedeleted};\\        
        CuboidVertices \texttt{range};
        }
        \caption{\textbf{Data Structure}: Tree node structure}
        \end{algorithm}        
    	\subsubsection{Construction}\label{subsubsec:construct}
    	Building the \textit{ikd-Tree} is similar to building a static k-d tree in \cite{bentley1975kdtree}. The \textit{ikd-Tree} splits the space at the median point along the longest dimension recursively until there is only one point in the subspace. The attributes in \textbf{Data Structure} are initialized during the construction, including calculating the tree size and range information of (sub-)trees.  
     
    	\subsection{Incremental Updates}\label{subsec:increupdates}
    	
    	The incremental updates on \textit{ikd-Tree} refer to incremental operations followed by dynamic re-balancing detailed in Section. \ref{sec:balance}. Two types of incremental operations are supported: point-wise operations and box-wise operations. The point-wise operations insert, delete or re-insert a single point to/from the k-d tree, while the box-wise operations insert, delete or re-insert all points in a given axis-aligned cuboid. In both cases, the point insertion is further integrated with on-tree downsampling, which maintains the map at a pre-determined resolution. In this paper, we only explain the point-wise insertion and box-wise delete as they are required by the map management of FAST-LIO2. Readers can refer to our open-source full implementation of {\it ikd-Tree} at Github repository\footnotemark[3] and technical documents contained therein for more details. 
    	    	

    	\subsubsection{Point Insertion with On-tree Downsampling} 
\addtolength{\tabcolsep}{-0.2em}    	    	
\begin{table}[tb]
    \renewcommand{\arraystretch}{1.5}    	
	\caption{Attributes Initialization of a New Tree Node to Insert}
	\label{tab:initialize}
	\centering
\begin{threeparttable}
\begin{tabular}{llll}
\toprule
\multicolumn{1}{c}{Attribute} & \multicolumn{1}{c}{Value}                     & \multicolumn{1}{c}{Attribute}&\multicolumn{1}{c}{Value} \\\cmidrule(l){1-2}\cmidrule(l){3-4}
\texttt{point}                & $\mathbf{p}$              & \texttt{axis}\tnote{1}                 & (\texttt{father}.\texttt{axis} + 1) \textbf{mod} k \\
\texttt{leftchild}            & NULL                      & \texttt{rightchild}           & NULL                                               \\
\texttt{treesize}             & 1                         & \texttt{invalidnum}           & 0                                                  \\
\texttt{deleted}              & false                     & \texttt{treedeleted}          & false                                              \\
\texttt{range}\tnote{2}                & $[\mathbf{p},\mathbf{p}]$ &                               &                                                    \\ \bottomrule
\end{tabular}
\begin{tablenotes}
\footnotesize
\item[1] The \textit{axis} is initialized using the division axis of its father node.
\item[2] The cuboid is initialized by setting minimal and maximal vertices as the point to insert.
\end{tablenotes}
\end{threeparttable}
	    \renewcommand{\arraystretch}{1}    	        
\end{table}    	        
\addtolength{\tabcolsep}{0.2em}      	
        In consideration of robotic applications, our \textit{ikd-Tree} supports simultaneous point insertion and map downsampling. The algorithm is detailed in {\bf Algorithm \ref{alg:downsample}}. For a given point $\mathbf{p}$ in $\{{}^G \bar{\mathbf p}_{j}\}$ from the state estimation module (see {\bf Algorithm \ref{alg:state_estimation}}) and downsample resolution $l$, the algorithm partitions the space evenly into cubes of length $l$, then the cube $\mathbf{C}_D$ that contains the point $\mathbf{p}$ is found (Line 2). The algorithm only keeps the point that is nearest to the center $\mathbf{p}_{center}$ of $\mathbf{C}_D$ (Line 3). This is achieved by firstly searching all points contained in $\mathbf{C}_D$ on the k-d tree and stores them in a point array $V$ together with the new point $\mathbf{p}$ (Line 4-5). The nearest point $\mathbf{p}_{nearest}$ is obtained by comparing the distances of each point in $V$ to the center $\mathbf{p}_{center}$ (Line 6). Then existing points in $\mathbf{C}_D$ are deleted (Line 7), after which the nearest point $\mathbf{p}_{nearest}$ is inserted into the k-d tree (Line 8). The implementation of box-wise search is similar to the box-wise delete as introduced in Section.  \ref{subsubsec:boxdelete}.
        
    	The point insertion (Line 11-24) on the \textit{ikd-Tree} is implemented recursively. The algorithm searches down from the root node until an empty node is found to append a new node (Line 12-14). The attributes of the new leaf node are initialized as Table \ref{tab:initialize}.  At each non-empty node, the new point is compared with the point stored on the tree node along the division axis for further recursion (Line 15-20). The attributes (e.g., \texttt{treesize}, \texttt{range}) of those visited nodes are updated with the latest information (Line 21) as introduced in Section. \ref{subsubsec:attr}. A balance criterion is checked and maintained for sub-trees updated with the new point to keep the balance property of \textit{ikd-Tree} (Line 22) as detailed in Section. \ref{sec:balance}.

        \setcounter{algocf}{1}         
        \begin{algorithm}[t]
        \SetAlgoLined
		\SetKwInOut{Input}{Input}
        \Input{Downsample Resolution $l$,\\ New Point to Insert $\mathbf{p}$,\\ Switch of Parallelly Re-building $SW$}
            \SetKwProg{Fn}{Function}{}{}
            \SetKwFunction{boxsearch}{BoxwiseSearch}
            \SetKwFunction{boxdelete}{BoxwiseDelete}            
            \SetKwFunction{insertpoint}{Insert}
            \SetKwFunction{nearest}{FindNearest}
            \SetKwFunction{getcenter}{Center}
            \SetKwFunction{cube}{FindCube}
            \SetKwFunction{initialize}{Initialize}
            \SetKwFunction{update}{AttributeUpdate}
            \SetKwFunction{maintain}{Rebalance}
            \SetKwProg{Alg}{Algorithm Start}{}{}
			\Alg{}{
	            $\mathbf{C}_D\leftarrow$ \cube{$l,\mathbf{p}$} \\
	            $\mathbf{p}_{center}\leftarrow$ \getcenter{$\mathbf{C}_D$};\\
	            $V\leftarrow$ \boxsearch{\texttt{RootNode},$\mathbf{C}_D$};\\
	            $V.push(\mathbf{p})$;\\
	            $\mathbf{p}_{nearest}\leftarrow$ \nearest$(V,\mathbf{p}_{center})$;\\
	            \boxdelete{\texttt{RootNode},$\mathbf{C}_D$}\\
	            \insertpoint{\texttt{RootNode},$\mathbf{p}_{nearest}$,\texttt{NULL},\texttt{SW}};}
	        \textbf{Algorithm End}\\
	         \textbf{ }\\
		\Fn{\insertpoint{\texttt{T}, $\mathbf{p}$, \texttt{father},\texttt{SW}}}{
			\eIf{\texttt{T} is empty}{
				\initialize{\texttt{T,$\mathbf{p}$,\texttt{father}}};\\
			}{			
				\texttt{ax} $\leftarrow$ \texttt{T}.\texttt{axis};\\
				\eIf{$\mathbf{p}[$\texttt{ax}$]<$ \texttt{T}.\texttt{point}$[$\texttt{ax}$]$}{
					\insertpoint{\texttt{T}.\texttt{leftchild},$\mathbf{p}$,\texttt{T},\texttt{SW}};
				}{
					\insertpoint{\texttt{T}.\texttt{rightchild},$\mathbf{p}$,\texttt{T},\texttt{SW}};
               }
               	\update{\texttt{T}};\\
				\maintain{\texttt{T},\texttt{SW}};               
           }
		}
        \textbf{End Function}		
        \caption{Point Insertion with On-tree Downsampling}
        \label{alg:downsample}
        \end{algorithm}          
        \subsubsection{Box-wise Delete using Lazy Labels} \label{subsubsec:boxdelete}
        
        In the delete operation, we use a lazy delete strategy. That is, the points are not removed from the tree immediately but only labeled as ``deleted" by setting the attribute \texttt{deleted} to true (see \textbf{Data Structure}, Line 6). If all nodes on the sub-tree rooted at node $T$ have been deleted, the attribute \texttt{treedeleted} of $T$ is set to true. Therefore the attributes \texttt{deleted} and \texttt{treedeleted} are called lazy labels. Points labeled as ``deleted" will be removed from the tree during a re-building process (see Section. \ref{sec:balance}).         
		
		Box-wise delete is implemented utilizing the range information in attribute \texttt{range} and the lazy labels on the tree nodes. As mentioned in \ref{subsec:datastructure}, the attribute \textit{range} is represented by a circumscribed cuboid $\mathbf{C}_T$. The pseudo-code is shown in \textbf{Algorithm \ref{alg:boxwise}}. Given the cuboid of points $\mathbf{C}_O$ to be deleted from a (sub-)tree rooted at $T$, the algorithm searches down the tree recursively and compares the circumscribed cuboid $\mathbf{C}_T$ with the given cuboid $\mathbf{C}_O$. If there is no intersection between $\mathbf{C}_T$ and $\mathbf{C}_O$, the recursion returns directly without updating the tree (Line 2). If the circumscribed cuboid $\mathbf{C}_T$ is fully contained in the given cuboid $\mathbf{C}_O$, the box-wise delete set attributes \texttt{deleted} and \texttt{treedeleted} to true (Line 5). As all points on the (sub-)tree are deleted, the attribute \texttt{invalidnum} is equal to the \texttt{treesize} (Line 6). For the condition that $\mathbf{C}_T$ intersects but not contained in $\mathbf{C}_O$, the current point $\mathbf{p}$ is firstly deleted from the tree if it is contained in $\mathbf{C}_O$ (Line 9), after which the algorithm looks into the child nodes recursively (Line 10-11). The attribute update of the current node $T$ and the balance maintenance is applied after the box-wise delete operation (Line 12-13).

        \begin{algorithm}[t]
        \SetAlgoLined

            \SetKwProg{Fn}{Function}{}{}
			\SetKwInOut{Input}{Input}
			\SetKwInOut{Output}{Output}
			\Input{Operation Cuboid $\mathbf{C}_O$,\\ k-d Tree Node $T$,\\ Switch of Parallelly Re-building $SW$}            
            \SetKwFunction{boxwiseop}{BoxwiseDelete}

            \SetKwFunction{update}{AttributeUpdate}
            \SetKwFunction{maintain}{Rebalance}
                    
            \SetKwProg{Fn}{Function}{}{}
            \Fn{\boxwiseop{\texttt{T},$\mathbf{C}_O$,\texttt{SW}}}{
                $\mathbf{C}_T \leftarrow T.range$;\\
                \lIf{$\mathbf{C}_T \cap \mathbf{C}_O = \varnothing$ }{\textbf{return}}

                \eIf{$\mathbf{C}_T \subseteqq \mathbf{C}_O$}{
                    \texttt{T}.\texttt{treedelete}, \texttt{T}.\texttt{delete} $\leftarrow$ true;\\
                    \texttt{T}.\texttt{invalidnum} = \texttt{T}.\texttt{treesize};
                }{
	                $\mathbf{p}\leftarrow$ \texttt{T}.\texttt{point};\\
	                \lIf{$\mathbf{p}\subset \mathbf{C}_O$}{\texttt{T}.\texttt{treedelete} = true}
	                \boxwiseop{\texttt{T.leftchild},$\mathbf{C}_O$,\texttt{SW}};\\
	                \boxwiseop{\texttt{T.rightchild},$\mathbf{C}_O$,\texttt{SW}};\\
	                \update{\texttt{T}};\\
					\maintain{\texttt{T},\texttt{SW}};                
                }

            }
            \textbf{End Function}\\
        \caption{Box-wise Delete}
        \label{alg:boxwise}         
        \end{algorithm}           
        

        \subsubsection{Attribute Update} \label{subsubsec:attr}
        After each incremental operation, attributes of the visited nodes are updated with the latest information using function \texttt{AttributeUpdate}. The function calculates the attributes \texttt{treesize} and \texttt{invalidnum} by summarizing the corresponding attributes on its two child nodes and the point information on itself; the attribute \texttt{range} is determined by merging the range information of the two child nodes and the point information stored on it; \texttt{treedeleted} is set true if the \texttt{treedeleted} of both child nodes are true and the node itself is deleted.  
    	
\subsection{Re-balancing} \label{sec:balance}
    	The \textit{ikd-Tree} actively monitors the balance property after each incremental operation and dynamically re-balances itself by re-building only the relevant sub-trees.
    	
    	\subsubsection{Balancing Criterion}
    	The balancing criterion is composed of two sub-criterions: $\alpha$-balanced criterion and $\alpha$-deleted criterion. Suppose a sub-tree of the \textit{ikd-Tree} is rooted at $T$. The sub-tree is $\alpha$-balanced if and only if it satisfies the following condition:
        \addtolength{\abovedisplayskip}{-0.1cm}
        \addtolength{\belowdisplayskip}{-0.1cm}     	
    	\begin{equation}
    	\begin{aligned}
    	    S(T.leftchild) &< \alpha_{bal} \Big(S(T)-1\Big)\\
    	    S(T.rightchild) &< \alpha_{bal} \Big(S(T)-1\Big)
    	\end{aligned}
    	\label{eq:abalance}
    	\end{equation}
    	where $\alpha_{bal} \in (0.5,1)$ and $S(T)$ is the \texttt{treesize} attribute of the node $T$. 
    	
    	The $\alpha$-deleted criterion of a sub-tree rooted at $T$ is
    	\begin{equation}
    	    \begin{aligned}
        	    I(T)< \alpha_{del} S(T)
    	    \end{aligned}
    	\label{eq:adeleted}
    	\end{equation}
    	where $\alpha_{del} \in(0,1)$ and $I(T)$ denotes the number of invalid nodes on the sub-tree (i.e., the attribute \texttt{invalidnum} of node $T$). 
    	
        If a sub-tree of the \textit{ikd-Tree} meets both criteria, the sub-tree is balanced. The entire tree is balanced if all sub-trees are balanced. Violation of either criterion will trigger a re-building process to re-balance that sub-tree: the $\alpha$-balanced criterion maintains the tree's maximum height. It can be easily proved that the maximum height of an $\alpha$-balanced tree is $\log_{1/\alpha_{bal}} (n)$ where $n$ is the tree size; the $\alpha$-deleted criterion ensures invalid nodes (i.e., labeled as ``deleted") on the sub-trees are removed to reduce tree size. Reducing the height and size of the k-d tree allows highly efficient incremental operations and queries in the future. 
    	
    	\subsubsection{Re-build \& Parallel Re-build}\label{subsubsec:rebuild}
        Assuming re-building is triggered on a subtree $\mathcal{T}$ (see Fig. \ref{fig:rebuild}), the sub-tree is firstly flattened into a point storage array $V$. The tree nodes labeled as ``deleted" are discarded during the flattening. A new perfectly balanced k-d tree is then built with all points in $V$ as Section. \ref{subsec:datastructure}. When re-building a large sub-tree on the \textit{ikd-Tree}, a considerable delay could occur and undermine the real-time performance of FAST-LIO2. To preserve high real-time ability, we design a double-thread re-building method. Instead of simply re-building in the second thread, our proposed method avoids information loss and memory conflicts in both threads by an operation logger, thus retaining full accuracy on $k$-nearest neighbor search at all times.
        \begin{figure}
            \centering
            \includegraphics[width = \columnwidth]{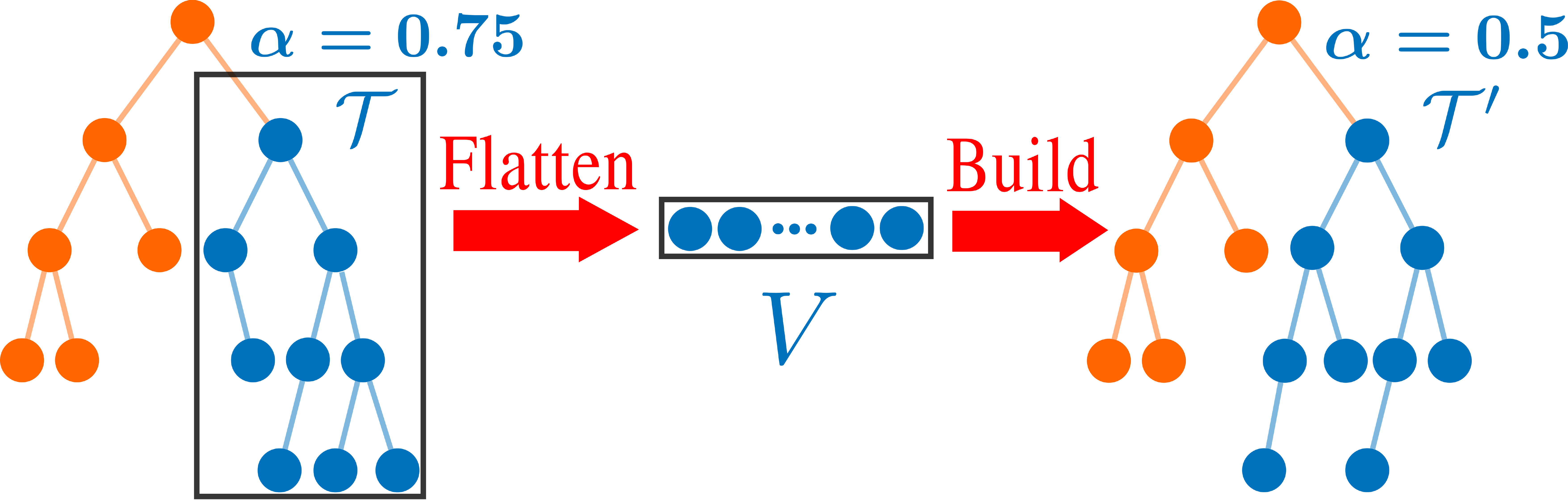}
            \caption{Re-building an unbalanced sub-tree}
            \label{fig:rebuild}
            \vspace{-0.1cm}
        \end{figure}    	
        
        The re-building method is presented in {\bf Algorithm \ref{alg:rebuild}}. When the balance criterion is violated, the sub-tree is re-built in the main thread when its tree size is smaller than a predetermined value $N_{\max}$; Otherwise, the sub-tree is re-built in the second thread. The re-building algorithm on the second thread is shown in function \texttt{ParRebuild}. Denote the sub-tree to re-build in the second thread as $\mathcal{T}$ and its root node as $T$. The second thread will lock all incremental updates (i.e., point insertion and delete) but not queries on this sub-tree (Line 12). Then the second thread copies all valid points contained in the sub-tree $\mathcal{T}$ into a point array $V$ (i.e., flatten) while leaving the original sub-tree unchanged for possible queries during the re-building process (Line 13). After the flattening, the original sub-tree is unlocked for the main thread to take further requests of incremental updates (Line 14). These requests will be simultaneously recorded in a queue named operation logger. Once the second thread completes building a new balanced k-d tree $\mathcal{T}'$ from the point array $V$ (Line 15), the recorded update requests will be performed again on  $\mathcal{T}'$ by function \texttt{IncrementalUpdates} (Line 16-18). Note that the parallel re-building switch is set to false as it is already in the second thread. After all pending requests are processed, the point information on the original sub-tree $\mathcal{T}$ is completely the same as that on the new sub-tree $\mathcal{T}'$ except that the new sub-tree is more balanced than the original one in the tree structure. The algorithm locks the node $T$ from incremental updates and queries and replaces it with the new one $T'$ (Line 20-22). Finally, the algorithm frees the memory of the original sub-tree (Line 23). This design ensures that during the re-building process in the second thread, the mapping process in the main thread proceeds still at the odometry rate without any interruption, albeit at a lower efficiency due to the temporarily unbalanced k-d tree structure. We should note that \texttt{LockUpdates} does not block queries, which can be conducted parallelly in the main thread. In contrast, \texttt{LockAll} blocks all access, including queries, but it finishes very quickly (i.e., only one instruction), allowing timely queries in the main thread. The function \texttt{LockUpdates} and \texttt{LockAll} are implemented by mutual exclusion (mutex).
        \begin{algorithm}[t]
        \SetAlgoLined
		\SetKwInOut{Input}{Input}
		\Input{Root node $T$ of (sub-) tree $\mathcal{T}$ for re-building,\\ Re-build Switch \texttt{SW}}
        \SetKwFunction{build}{Build}
        \SetKwFunction{flatten}{Flatten}
        \SetKwFunction{lockop}{LockUpdates}
        \SetKwFunction{lockall}{LockAll}
        \SetKwFunction{unlock}{Unlock}
        \SetKwFunction{release}{Free}
        \SetKwFunction{runop}{IncrementalUpdates}
        \SetKwFunction{sleep}{Sleep}
        \SetKwFunction{maintain}{Rebalance}
        \SetKwFunction{check}{ViolateCriterion}
		\SetKwFunction{rebuild}{Rebuild}
		\SetKwFunction{parrebuild}{ParRebuild}    
		\SetKwFunction{newthread}{ThreadSpawn}  
		\SetKwProg{Fn}{Function}{}{}
		\Fn{\maintain{$T$,\texttt{SW}}}{
			\If{\check{$T$}}{
	        		\eIf{\texttt{T}.\texttt{treesize} $<N_{\max}$ \textbf{or} Not \texttt{SW}}{ 
                \rebuild{$T$}}{ 
	       		\newthread{\parrebuild,$T$}
		        }
		    }		
		}
        \textbf{End Function}\\
        \textbf{ }\\
        \Fn{\parrebuild{$T$}}{
            \lockop{$T$};\\
            $V \leftarrow \flatten{T}$;\\
            \unlock{$T$};\\
            $T'\leftarrow \build{V}$;\\
            \ForEach{\texttt{op} \textbf{in} \texttt{OperationLogger}}{
                \runop{$T'$,\texttt{op},\texttt{false}}
            }
            $T_{temp} \leftarrow T$;\\                
            \lockall{$T$};\\                        
            $T \leftarrow T'$;\\
            \unlock{$T$};\\
            \release{$T_{temp}$}; \\
        }
        \textbf{End Function}
        \caption{Rebuild (sub-) tree for re-balancing}
        \label{alg:rebuild} 
        \end{algorithm} 

	\subsection{K-Nearest Neighbor Search}
	    Though being similar to existing implementations in those well-known k-d tree libraries\cite{arya1998ann,libnabo2012,muja2009flannfast}, the nearest search algorithm is thoroughly optimized on the \textit{ikd-Tree}. The range information on the tree nodes is well utilized to speed up our nearest neighbor search using a ``bounds-overlap-ball'' test detailed in \cite{friedman1977search}. A priority queue \texttt{q} is maintained to store the $k$-nearest neighbors so far encountered and their distance to the target point. When recursively searching down the tree from its root node, the minimal distance $d_{\min}$ from the target point to the cuboid $\mathbf{C}_T$ of the tree node is calculated firstly. If the minimal distance $d_{\min}$ is no smaller than the maximal distance in \texttt{q}, there is no need to process the node and its offspring nodes. Furthermore, in FAST-LIO2 (and many other LiDAR odometry), only when the neighbor points are within a given threshold around the target point would be viewed as inliers and hence used in the state estimation, this naturally provides a maximal search distance for a ranged search of $k$-nearest neighbors\cite{libnabo2012}. In either case, the ranged search prunes the algorithm by comparing $d_{\min}$ with the maximal distance, thus reducing the amount of backtracking to improve the time performance. It should be noted that our \textit{ikd-Tree} supports multi-thread $k$-nearest neighbor search for parallel computing architectures.
	
        \subsection{Time Complexity Analysis}\label{subsec:timeanalysis}
        The time complexity of \textit{ikd-Tree} breaks into the time for incremental operations (insertion and delete), re-building, and $k$-nearest neighbor search. Note that all analyses are provided under the assumption of low dimensions (e.g., three dimensions in FAST-LIO2).
        
        \subsubsection{Incremental Operations}
        Since the insertion with on-tree downsampling relies on box-wise delete and box-wise search, the box-wise operations are discussed first. Suppose $n$ denotes the tree size of the \textit{ikd-Tree}, the time complexity of box-wise operations on the \textit{ikd-Tree} is:
        \begin{lemma}
        \textit{Suppose points on the ikd-Tree are in 3-d space $S_x\times S_y\times S_z$ and the operation cuboid is $\mathbf{C}_D = L_x\times L_y \times L_z$. The time complexity of box-wise delete and search of \textbf{Algorithm} \ref{alg:boxwise} with cuboid $\mathbf{C}_D$ is }
        \begin{equation}
            O(H(n)) = 
            \begin{cases}
                O(\log n) &  \text{if} \Delta_{\min} \geqslant \alpha(\frac{2}{3})\text{(*)}\\
                O(n^{1-a-b-c}) & \text{if} \Delta_{\max}\leqslant 1-\alpha(\frac{1}{3})\text{(**)}\\
                O(n^{\alpha(\frac{1}{3})-\Delta_{\min}-\Delta_{\med}}) & \text{if (*) and (**) fail and}\\
                 & \Delta_{\med}<\alpha(\frac{1}{3})-\alpha(\frac{2}{3})\\
                O(n^{\alpha(\frac{2}{3})-\Delta_{\min}}) & \text{otherwise.}
            \end{cases}
        \end{equation} 
        \textit{where $a = \log_n \frac{S_x}{L_x}$, $b = \log_n \frac{S_y}{L_y}$ and $c = \log_{n} \frac{S_z}{L_z}$ with $a,b,c\geqslant 0$. $\Delta_{\min}$, $\Delta_{\med}$ and $\Delta_{\max}$ are the minimal, median and maximal value among $a$, $b$ and $c$. $\alpha (u)$ is the flajolet-puech function with $u\in [0,1]$, where particular value is provided: $\alpha(\frac{1}{3}) = 0.7162$ and $\alpha (\frac{2}{3} )=0.3949$.} 
        \label{Lemma:boxwise}        
        \end{lemma}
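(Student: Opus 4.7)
The plan is to derive a recurrence for the number $H(n)$ of \textit{ikd-Tree} nodes visited by \texttt{BoxwiseDelete} (Algorithm~\ref{alg:boxwise}) on a balanced tree of size $n$, and then solve it using the Flajolet--Puech analysis of partial-match queries, which is exactly what introduces the function $\alpha(u)$ and the piecewise structure of the bound.

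First I would set up the recurrence by examining one level of the recursion. Because the \textit{ikd-Tree} is $\alpha$-balanced with $\alpha_{\text{bal}}\in(0.5,1)$ and the splitting axis cycles as $x\!\to\!y\!\to\!z\!\to\!x$, three consecutive levels halve the subtree along each axis exactly once, so every three levels the subtree size shrinks by a factor of $8$ and its circumscribed cuboid $\mathbf{C}_T$ shrinks by a factor of $2$ in each dimension. At a node splitting on axis $i$ the query cuboid either lies entirely on one side of the split (recurse into one child) or straddles it (recurse into both); the disjoint test on line~3 and the containment test on lines~4--6 both kill branches in $O(1)$ time via the lazy labels. Writing $\ell_i$ for the current subtree extent along axis $i$, one obtains a per-level branching factor of $1+L_i/\ell_i$ while $\ell_i>L_i$ (\emph{active} dimension) and of $1$ once $\ell_i\le L_i$ (\emph{saturated} dimension, since both children are then fully contained along that axis and handled in $O(1)$). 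Folding three successive levels together yields a three-phase recurrence of the form $H(n)=O(1)+M_{abc}\,H(n/8)$ with $M_{abc}$ equal to the product of the three per-axis branching factors.

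Next I would distinguish the four cases by tracking how long each axis remains active. Since axis $i$ is halved every three levels starting from $S_i$, it stays active for $\log_2(S_i/L_i)$ splits along that axis, i.e.\ until the overall subtree size has shrunk by a factor of $n^{a_i}$, with $a_x=a$, $a_y=b$, $a_z=c$. Case~(*), $\Delta_{\min}\ge\alpha(2/3)$, is the regime where every dimension remains active throughout the whole descent: only a single child is taken per level and $H(n)=O(\log n)$. Case~(**), $\Delta_{\max}\le 1-\alpha(1/3)$, is the opposite extreme in which every dimension saturates immediately at the root; the algorithm then visits exactly the nodes inside $\mathbf{C}_O$, a count of $\Theta\!\bigl(n\,L_xL_yL_z/(S_xS_yS_z)\bigr)=\Theta(n^{1-a-b-c})$. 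The two intermediate bounds correspond to one or two dimensions saturating before the others; the exponent in each is the Flajolet--Puech partial-match exponent $\alpha(s/3)$ of the still-active phase (with $s\in\{1,2\}$) decorated by a $-\sum\Delta_i$ correction over the already-saturated axes, giving the claimed $n^{\alpha(1/3)-\Delta_{\min}-\Delta_{\med}}$ and $n^{\alpha(2/3)-\Delta_{\min}}$.

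Finally I would solve the unsaturated-phase recurrence via the Flajolet--Puech machinery. The characteristic equation of the recurrence restricted to the $s$ still-active axes has as its positive dominant root exactly $\alpha(s/k)$ with $k=3$; standard Mellin-transform or singularity analysis then delivers $H(n)=\Theta(n^{\alpha(s/3)})$ on that phase, and composing with the post-saturation phases reproduces the piecewise bound. A final $O(1)$-per-visited-node accounting via \texttt{AttributeUpdate} confirms that total work is proportional to the node count, and the identical argument applies verbatim to box-wise search with the same $H(n)$ count plus an $O(k)$ reporting factor at contained subtrees. The \emph{main obstacle} is the intermediate regime: because $\alpha(u)$ is defined only implicitly as the positive real root of a transcendental equation and admits no closed form, the analysis must carefully track \emph{which} $\Delta$-values drive the exponent across successive saturation events, and glue the before/after phases into a single non-autonomous recurrence whose dominant eigenvalue produces the piecewise exponent in the statement. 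Porting the Flajolet--Puech partial-match argument, originally developed for random k-d trees, to our balanced \textit{ikd-Tree} with lazy-label early termination is the technical heart of the proof.
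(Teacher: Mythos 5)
Your proposal sets out to re-derive the range-search theorem itself, and that is where the gap lies. The paper never proves the piecewise bound: its entire proof is a reduction --- Algorithm~\ref{alg:boxwise} visits exactly the nodes a standard orthogonal range search would visit, the lazy labels and attribute updates add only $O(1)$ work per visited node, and the asymptotic cost of axis-aligned range search on a k-d tree is then quoted from \cite{chanzy2001rangesearch}. The piecewise exponents are precisely the hard analytic content of that cited result, and your sketch does not establish them. Two of your absolute claims are false as stated: in case (*) branching does occur whenever the query cuboid straddles a splitting plane, which happens with positive probability at every level, so the $O(\log n)$ bound is a statement about the expected total over the whole descent, not about a single-child path; in case (**) the algorithm visits not only the nodes inside $\mathbf{C}_O$ but also every node whose cell meets its boundary, and the content of condition (**) is exactly that these boundary contributions (which scale like the face and edge terms $n^{\alpha(\frac{1}{3})-\Delta_{\min}-\Delta_{\med}}$ and $n^{\alpha(\frac{2}{3})-\Delta_{\min}}$) are dominated by the volume term $n^{1-a-b-c}$ --- a fact that requires the very estimates you are trying to prove, not the assertion that all dimensions ``saturate at the root.''

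In addition, the model in which you set up the recurrence matches neither the \textit{ikd-Tree} nor the cited analysis: you assume cyclic splitting axes and exact halving of subtree size and cell extent at every level, whereas the \textit{ikd-Tree} splits at the median along the longest dimension and, after incremental updates, is only $\alpha_{bal}$-balanced; the Flajolet--Puech/Chanzy-style expected-cost expansions are proved for k-d trees, and you yourself flag the porting of that machinery and the gluing of saturation phases in the intermediate regime as an unresolved ``technical heart.'' As it stands the proposal is a plausible heuristic reconstruction of the known result rather than a proof. The intended argument is the short one: observe that box-wise delete and search are range search plus $O(1)$ per-node bookkeeping (your final paragraph contains this observation), and invoke \cite{chanzy2001rangesearch} for $O(H(n))$; a self-contained derivation along your lines would amount to reproving that reference's theorem, including the regime analysis that produces the four cases.
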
 
        
        \begin{proof}
            The asymptotic time complexity for range search of an axis-aligned hypercube on a k-d tree is provided in \cite{chanzy2001rangesearch}. The box-wise delete can be viewed as a range search except that lazy labels are attached to the tree nodes, which is $O(1)$. Therefore, the conclusion of range search can be applied to the box-wise delete and search on the \textit{ikd-Tree} which leads to $O(H(n))$. 
        \end{proof}
        
        The time complexity of insertion with on-tree downsampling is given as
        \begin{lemma}
        \textit{The time complexity of point insertion with on-tree downsampling in \textbf{Algorithm} \ref{alg:downsample} on \textit{ikd-Tree} is $O(\log n)$.}
        \label{Lemma:pointwise}        
        \end{lemma}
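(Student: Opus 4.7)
The plan is to decompose Algorithm \ref{alg:downsample} into its elementary pieces and bound each in turn, relying on the \emph{downsampling invariant} that the algorithm itself maintains: at any instant, each side-$l$ cube of the downsampling grid contains at most one point of the tree. Locating the containing cube $\mathbf{C}_D$ and its center (Lines 2--3), pushing $\mathbf{p}$ onto $V$ (Line 5), and scanning $V$ for the point nearest $\mathbf{p}_{center}$ (Line 6) are all $O(1)$, because the invariant forces $|V|\le 2$ immediately after the push, and the cube lookup is a single coordinate discretization.

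The two nontrivial calls are the box-wise search on Line 4 and the box-wise delete on Line 7, both with the same query cuboid $\mathbf{C}_D$ of side $l$. I would invoke Lemma \ref{Lemma:boxwise} with $L_x=L_y=L_z=l$ and with $S_x,S_y,S_z$ taken as the side-lengths of the local map region. The downsampling invariant forces $n\le S_xS_yS_z/l^3$, so each of $a=\log_n(S_x/l)$, $b$, $c$ is at least $\tfrac{1}{3}$ in the densely packed cubic case and strictly larger otherwise. This places $\mathbf{C}_D$ in the ``small-query'' regime, and a direct recursion-tree count yields $O(\log(S_xS_yS_z/l^3)) = O(\log n)$ interior path nodes down to the depth at which tree cells first drop below volume $l^3$, plus only $O(n l^3/(S_xS_yS_z)) = O(1)$ cells below that depth; hence each call is $O(\log n)$. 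Finally, the point-wise \texttt{Insert} on Line 8 follows a single root-to-leaf path of length $O(\log n)$ (the $\alpha$-balanced criterion of Section \ref{sec:balance} caps the tree height at $\log_{1/\alpha_{bal}}n$), doing $O(1)$ work per node including \texttt{AttributeUpdate}; the synchronous cost of \texttt{Rebalance} along this path is amortized $O(\log n)$ by the scapegoat-tree argument, and any large re-build is offloaded to the second thread via \texttt{ParRebuild}.

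Summing the three contributions gives the claimed $O(\log n)$. The main obstacle is the box-wise step: the generic worst-case bound for a 3-D k-d tree range query is $\Theta(n^{2/3})$, which is far too weak, so the proof must exploit the exact match between the query side and the downsampling resolution $l$, together with the population bound $n\le V_{\text{reg}}/l^3$ it enforces. If the numerical threshold $\Delta_{\min}\ge \alpha(\tfrac{2}{3})\approx 0.395$ of case (*) of Lemma \ref{Lemma:boxwise} is tighter than the worst-case $\tfrac{1}{3}$ that downsampling alone guarantees, the direct path-plus-cell count above recovers $O(\log n)$ without appealing to (*); this is the cleanest route and what I would present in the final write-up.
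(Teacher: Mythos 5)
Your decomposition is the same as the paper's (bound the box-wise search/delete via Lemma \ref{Lemma:boxwise}, then add an $O(\log n)$ root-to-leaf insertion using the height bound $\log_{1/\alpha_{bal}}(n)$ from (\ref{eq:abalance})), and your observation that the downsampling invariant alone only forces $a,b,c\geq \tfrac{1}{3}<\alpha(\tfrac{2}{3})=0.3949$ is a legitimate remark about what the paper's appeal to case (*) really assumes. The genuine gap is in the step you substitute for (*): the ``direct path-plus-cell count.'' The cost of the box-wise recursion is governed by how many tree cells intersect the query cuboid, and this is not controlled by volume ratios: a cell intersecting $\mathbf{C}_D$ need not contain any point lying in $\mathbf{C}_D$, need not have volume comparable to $l^3$ (median splits along the longest dimension do not give uniformly shaped cells), and the recursion branches above your threshold depth, so neither the ``$O(\log n)$ interior path nodes'' claim nor the ``$O(nl^3/(S_xS_yS_z))=O(1)$ cells below'' claim is justified. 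Worse, the conclusion is false in exactly the regime where you deploy it: at the maximal density the downsampling invariant permits, $n\approx S_xS_yS_z/l^3$, one has $a=b=c=\tfrac{1}{3}$, which falls into the ``otherwise'' case of Lemma \ref{Lemma:boxwise} with expected cost $O(n^{\alpha(\frac{2}{3})-\frac{1}{3}})\approx O(n^{0.06})$, and the Flajolet--Puech exponents underlying that lemma are tight asymptotics, not loose upper bounds. An elementary count yielding $O(\log n)$ for a side-$l$ cube query in that configuration would contradict the very lemma you invoke, so you cannot ``recover $O(\log n)$ without appealing to (*).''

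The repair is to follow the paper's route and make its hypothesis explicit rather than to replace it: assume (as the paper states informally, and as holds in practice because LiDAR maps are surface-like, so $n$ scales roughly as $(S/l)^2$ rather than $(S/l)^3$, giving $a\approx b\approx c\approx\tfrac{1}{2}$) that the downsample cube is small enough relative to the mapped space that $\Delta_{\min}\geq\alpha(\tfrac{2}{3})$, conclude $O(\log n)$ for the box-wise search and delete from case (*), and then add the $O(\log n)$ point insertion via the height bound. Your handling of Lines 2--6 ($|V|=O(1)$ from the one-point-per-cell invariant) and of the insertion with attribute updates and amortized/offloaded re-balancing is fine and consistent with the paper, which analyzes re-building separately.
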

        \begin{proof} The downsampling method on the \textit{ikd-Tree} is composed of box-wise search and delete followed by the point insertion. By applying \textbf{Lemma} \ref{Lemma:boxwise}, the time complexity of downsample is $O(H(n))$. Generally, the downsample cube $\mathbf{C}_D$ is very small comparing with the entire space. Therefore, the normalized range $\Delta x$, $\Delta y$, and $\Delta z$ are small, and the value of $\Delta_{\min}$ satisfies the condition (*) for the time complexity of $O(\log n)$.
        
        The maximum height of the \textit{ikd-Tree} can be easily proved to be $\log_{1/\alpha_{bal}}(n)$ from Eq. (\ref{eq:abalance}) while that of a static k-d tree is $\log_2 n$. Hence the lemma is directly obtained from \cite{bentley1975kdtree} where the time complexity of point insertion on a k-d tree was proved to be $O(\log n)$. Summarizing the time complexity of both downsample and insertion concludes that the time complexity of insertion with on-tree downsampling is $O(\log n)$.
        \end{proof}
        \subsubsection{Re-build} The time complexity for re-building falls into two types: single-thread re-building and parallel double-thread re-building. In the former case, the re-building is performed by the main thread recursively. Each level takes the time of sorting (i.e., $O(n)$) and the total time over $\log n$ levels is $O(n\log n)$ \cite{bentley1975kdtree} when the dimension $k$ is low. For parallel re-building, the time consumed in the main thread is only flattening (which suspends the main thread from further incremental updates, \textbf{Algorithm \ref{alg:rebuild}}, Line 12-14) and tree update (which takes constant time $O(1)$, \textbf{Algorithm \ref{alg:rebuild}}, Line 20-22) but not building (which is performed in parallel by the second thread, \textbf{Algorithm \ref{alg:rebuild}}, Line 15-18), leading to time complexity of $O(n)$ (viewed from the main thread). In summary, the time complexity of re-building the \textit{ikd-Tree} is $O(n)$ for double-thread parallel re-building and $O(n\log n)$ for single-thread re-building. 
        \subsubsection{K-Nearest Neighbor Search}
        As the maximum height of the \textit{ikd-Tree} is maintained no larger than $\log_{1/\alpha_{bal}}(n)$, where $n$ is the tree size, the time complexity to search down from root to leaf nodes is $O(\log n)$. During the process of searching $k$-nearest neighbors on the tree, the number of backtracking is proportional to a constant $\bar{l}$ which is independent of the tree size\cite{friedman1977search}. Therefore, the expected time complexity to obtain $k$-nearest neighbors on the \textit{ikd-Tree} is $O(\log n)$.

\section{Benchmark Results}\label{sec:benchmark}
In this section, extensive experiments in terms of accuracy, robustness, and computational efficiency are conducted on various open datasets. We first evaluate our data structure, i.e., \textit{ikd-Tree}, against other data structures for $k$NN search on 18 dataset sequences of different sizes. Then in Section. \ref{sec:benchmark_accuracy}, we compare the accuracy and processing time of FAST-LIO2 on 19 sequences. All the sequences are chosen from 5 different datasets collected by both solid-state LiDAR~\cite{liu2020low} and spinning LiDARs. The first dataset is from the work LILI-OM~\cite{li2021towards} and is collected by a solid-state 3D LiDAR Livox Horizon\footnote[4]{\url{https://www.livoxtech.com/horizon}}, which has non-repetitive scan pattern and 81.7 $^\circ$ (Horizontal) × 25.1$^\circ$ (Vertical) FoV, at a typical scan rate of 10 $Hz$, referred to as \textit{lili}. The gyroscope and accelerometer measurements are sampled at 200 $Hz$ by a 6-axis Xsens MTi-670 IMU. The data is recorded in the university campus and urban streets with structured scenes. The second dataset is from the work LIO-SAM~\cite{T2020liosam} in MIT campus and contains several sequences collected by a VLP-16 LiDAR\footnote[5]{\url{https://velodynelidar.com/products/puck-lite/}} sampled at 10 $Hz$ and a MicroStrain 3DM-GX5-25 9-axis IMU sampled at 1000 $Hz$, referred to as \textit{liosam}. It contains different kinds of scenes, including structured buildings and forests on campus.  The third dataset ``\textit{utbm}"~\cite{eu_longterm_dataset} is collected with a human-driving robocar in maximum 50 $km/h$ speed which has two 10 $Hz$ Velodyne HDL-32E LiDAR\footnote[6]{\url{https://velodynelidar.com/products/hdl-32e/}} and 100 $Hz$ Xsens MTi-28A53G25 IMU. In this paper, we only consider the left LiDAR. The fourth dataset ``\textit{ulhk}"~\cite{wen2020urbanloco} contains the 10 $Hz$ LiDAR data from Velodyne HDL-32E and 100 $Hz$ IMU data from a 9-axis Xsens MTi-10 IMU. All the sequences of \textit{utbm} and \textit{ulhk} are collected in structured urban areas by a human-driving vehicle while \textit{ulhk} also contains many moving vehicles. The last one, ``\textit{nclt}"\cite{carlevaris2016university} is a large-scale, long-term autonomy UGV (unmanned ground vehicle) dataset collected in the University of Michigan’s North Campus. The \textit{nclt} dataset contains 10 $Hz$ data from a Velodyne HDL-32E LiDAR and 50 $Hz$ data from Microstrain MS25 IMU. The \textit{nclt} dataset has a much longer duration and amount of data than other datasets and contains several open scenes such as a large open parking lot. The datasets information including the sensors' type and data rate is summarized in Table.\ref{tab:benchmark_datasets}. The details about all the 37 sequences used in this section, including name, duration, and distance, are listed in Table. \ref{tab:append:detail_sequences} of Appendix. \ref{appendix:sequence_list}.
\addtolength{\tabcolsep}{0.5em} 
\begin{table}[t]
\caption{The Datasets for Benchmark}
\label{tab:benchmark_datasets}
\centering
\begin{threeparttable}
\begin{tabular}{llrllr} 
\toprule
\multirow{2}{*}{} & \multicolumn{2}{c}{LiDAR}              &  & \multicolumn{2}{c}{IMU}             \\ 
\cline{2-3}\cline{5-6}
                  & Type        & \multicolumn{1}{l}{Line} &  & Type   & \multicolumn{1}{l}{Rate}  \\ 
\midrule
\textit{lili}     & Solid-state & —                        &  & 6-axis & 200 $Hz$                        \\
\textit{utbm}     & Spinning    & 32                       &  & 6-axis & 100 $Hz$                        \\
\textit{ulhk}     & Spinning    & 32                       &  & 9-axis & 100 $Hz$                       \\
\textit{nclt}     & Spinning    & 32                       &  & 9-axis & 100 $Hz$                       \\
\textit{liosam}   & Spinning    & 16                       &  & 9-axis & 1000 $Hz$                       \\
\bottomrule
\end{tabular}
\begin{tablenotes}
\footnotesize
\item[1] In order to make LIO-SAM works, the IMU rate in dataset \textit{nclt} is increased from 50 to 100 $Hz$ through zero-order interpolation.
\end{tablenotes}
\end{threeparttable}
\end{table}
\addtolength{\tabcolsep}{-0.5em} 

\subsection{Implementation}\label{subsec:nominal}
We implemented the proposed FAST-LIO2 system in C++ and Robots Operating System (ROS). The iterated Kalman filter is implemented based on the {\it IKFOM} toolbox presented in our previous work\cite{he2021embedding}. In the default configuration, the local map size $L$ is chosen as 1000 $m$, and the LiDAR raw points are directly fed into state estimation after a 1:4 (one out of four LiDAR points) temporal downsampling. Besides, the spatial downsample resolution (see {\bf Algorithm \ref{alg:downsample}}) is set to $l = 0.5 m$ for all the experiments. The parameter of \textit{ikd-Tree} is set to $\alpha_{bal} = 0.6$, $\alpha_{del}=0.5$ and $N_{\max}=1500$. The computation platform for benchmark comparison is a lightweight UAV onboard computer: DJI Manifold 2-C\footnote[7]{\url{https://www.dji.com/cn/manifold-2/specs}} with a 1.8 $GHz$ quad-core Intel i7-8550U CPU and 8 $GB$ RAM. For FAST-LIO2, we also test it on an ARM processor that is typically used in embedded systems with reduced power and cost. The ARM platform is Khadas VIM3\footnote[8]{\url{https://www.khadas.com/vim3}} which has a low-power 2.2 $GHz$ quad-core Cortex-A73 CPU and 4 $GB$ RAM, denoted as the keyword ``ARM". We denote ``FAST-LIO2 (ARM)" as the implementation of FAST-LIO2 on the ARM-based platform.

\subsection{Data structure Evaluation}\label{sec:benchmark_ds}
\subsubsection{Evaluation Setup}
We select three state-of-art implementations of dynamic data structure to compare with our \textit{ikd-Tree}: The boost geometry library implementation of R$^\ast$-tree\cite{boost_geometry}, the Point Cloud Library implementation of octree\cite{rusu2011pcl} and the \textit{nanoflann}\cite{blanco2014nanoflann} implementation of dynamic k-d tree. These tree data structure implementations are chosen because of their high implementation efficiency. Moreover, they support dynamic operations (i.e., point insertion, delete) and range (or radius) search that is necessary to be integrated with FAST-LIO2 for a fair comparison with {\it ikd-Tree}. For the map downsampling, since the other data structures do not support on-tree downsampling as \textit{ikd-Tree}, we apply a similar approach as detailed in \ref{subsec:increupdates} by utilizing their ability of range search (for octree and R$^\ast$-tree) or radius search (for \textit{nanoflann} k-d tree). More specifically, for octree and R$^\ast$-tree, their range search directly returns points within the target cuboid $\mathbf C_D$ (see \textbf{Algorithm \ref{alg:downsample}}). For \textit{nanoflann} k-d tree, the target cuboid $\mathbf C_D$ is firstly split into small cubes whose edge length equals the minimal edge length of the cuboid. Then the points inside the circumcircle of each small cube are obtained by radius search, after which points outside the cuboid are filtered out via a linear approach while points inside the target cuboid $\mathbf C_D$ remain. Finally, similar to \textbf{Algorithm \ref{alg:downsample}}, points in $\mathbf C_D$ other than the nearest point to the center are removed from the map. For the box-wise delete operation required by map move (see Section. \ref{sec:map_manage}), it is achieved by removing points within the specified cuboid one by one according to the point indices obtained from the respective range or radius search.

All the four data structure implementations are integrated with FAST-LIO2 and their time performance are evaluated on 18 sequences of different sizes. We run the FAST-LIO2 with each data structure for each sequence and record the time for $k$NN search, point insertion (with map downsampling), box-wise delete due to map move, the number of new scan points, and the number of map points (i.e., tree size) at each step. The number of nearest neighbors to find is $5$. 

\begin{figure}[t]
    \centering
    \includegraphics[width=1.0\columnwidth]{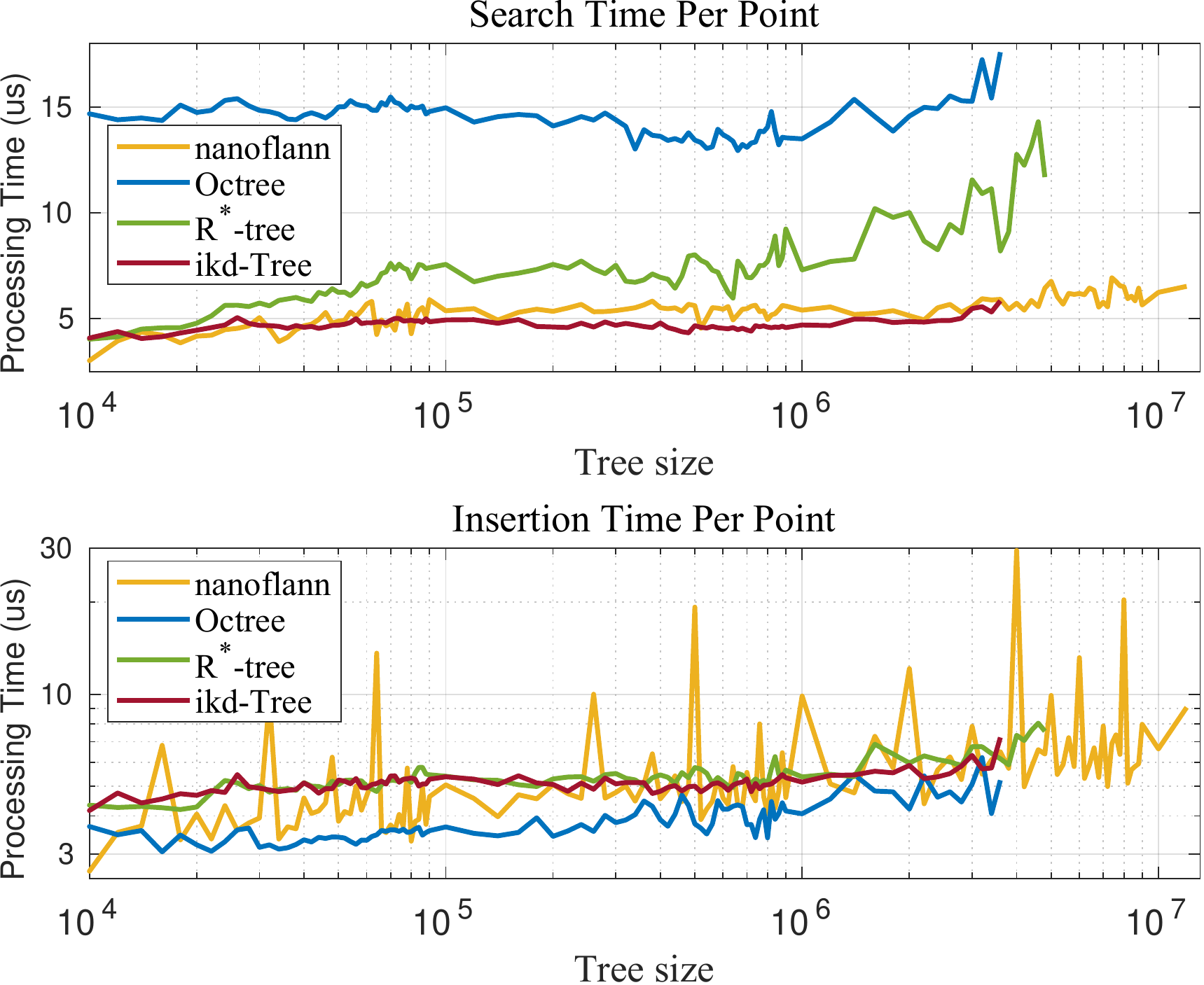}
    \caption{Data structure comparison over different tree size.}
    \label{fig:dscmp_size}
\end{figure}

\subsubsection{Comparison Results}

\begin{table*}[t]
\caption{The Comparison of Average Time Consumption Per Scan on Incremental Updates, $k$NN Search and Total Time}
\label{tab:dscomp}
\centering
\begin{threeparttable}
\begin{tabular}{@{}lrrrrrrrrrrrr@{}}
\toprule
 & \multicolumn{4}{c}{Incremental Update\tnote{1} [ms]}      & \multicolumn{4}{c}{$k$NN Search\tnote{2} [ms]}        & \multicolumn{4}{c}{Total [ms]}                      \\ \cmidrule(l){2-5}\cmidrule(l){6-9}\cmidrule(l){10-13} 
 & ikd-Tree      & nanoflann & Octree        & R$^\ast$-tree & ikd-Tree       & nanoflann & Octree & R$^\ast$-tree & ikd-Tree       & nanoflann & Octree & R$^\ast$-tree \\ \midrule
\textit{utbm\_1} & 3.23          & 3.43      & \textbf{2.12} & 3.94          & \textbf{15.19} & 15.80     & 42.88  & 22.56         & \textbf{18.42} & 19.22     & 45.00  & 26.50         \\
\textit{utbm\_2} & 3.40          & 3.65      & \textbf{2.24} & 4.18          & \textbf{15.52} & 16.09     & 44.70  & 23.46         & \textbf{18.93} & 19.75     & 46.94  & 27.64         \\
\textit{utbm\_3} & 3.77          & 4.17      & \textbf{2.36} & 4.52          & \textbf{16.83} & 18.54     & 45.72  & 23.12         & \textbf{20.60} & 22.70     & 48.08  & 27.64         \\
\textit{utbm\_4} & 3.52          & 3.70      & \textbf{2.26} & 4.32          & \textbf{16.53} & 17.60     & 44.80  & 24.74         & \textbf{20.06} & 21.30     & 47.06  & 29.06         \\
\textit{utbm\_5} & 3.34          & 3.60      & \textbf{2.21} & 4.21          & \textbf{15.51} & 16.65     & 45.42  & 23.38         & \textbf{18.85} & 20.25     & 47.63  & 27.58         \\
\textit{utbm\_6} & 3.61          & 4.12      & \textbf{2.34} & 4.60          & \textbf{16.25} & 17.14     & 43.06  & 23.49         & \textbf{19.86} & 21.27     & 45.40  & 28.09         \\
\textit{utbm\_7} & 3.82          & 4.62      & \textbf{2.55} & 5.26          & \textbf{15.42} & 16.97     & 42.06  & 25.87         & \textbf{19.24} & 21.59     & 44.61  & 31.13         \\
\textit{ulhk\_1} & 1.97          & 1.87      & \textbf{1.12} & 2.30          & \textbf{18.23} & 21.73     & 48.30  & 23.45         & \textbf{20.20} & 23.60     & 49.43  & 25.75         \\
\textit{ulhk\_2} & 3.51          & 3.43      & \textbf{2.32} & 4.23          & \textbf{22.26} & 26.07     & 64.56  & 31.75         & \textbf{25.77} & 29.49     & 66.88  & 35.98         \\
\textit{ulhk\_3} & 1.60          & 1.58      & \textbf{1.10} & 1.93          & \textbf{13.62} & 14.87     & 42.65  & 20.49         & \textbf{15.22} & 16.45     & 43.74  & 22.42         \\
\textit{nclt\_1} & 1.14          & 1.59      & \textbf{0.99} & 2.07          & \textbf{14.50} & 18.83     & 41.58  & 28.07         & \textbf{15.64} & 20.41     & 42.57  & 30.14         \\
\textit{nclt\_2} & \textbf{1.35} & 2.04      & 1.36          & 2.66          & \textbf{14.68} & 18.99     & 46.56  & 29.20         & \textbf{16.03} & 21.03     & 47.91  & 31.86         \\
\textit{nclt\_3} & \textbf{1.00} & 1.42      & 1.03          & 2.20          & \textbf{14.41} & 19.25     & 46.19  & 30.10         & \textbf{15.42} & 20.67     & 47.22  & 32.29         \\
\textit{lili\_1} & 1.41          & 1.42      & \textbf{0.83} & 1.79          & \textbf{9.20}  & 9.71      & 26.31  & 12.65         & \textbf{10.61} & 11.13     & 27.15  & 14.44         \\
\textit{lili\_2} & 1.53          & 1.50      & \textbf{0.84} & 1.81          & \textbf{8.94}  & 9.27      & 26.18  & 13.43         & \textbf{10.47} & 10.77     & 27.02  & 15.24         \\
\textit{lili\_3} & 1.10          & 1.14      & \textbf{0.63} & 1.38          & \textbf{8.46}  & 8.87      & 25.45  & 13.18         & \textbf{9.57}  & 10.00     & 26.08  & 14.56         \\
\textit{lili\_4} & 0.96          & 0.99      & \textbf{0.62} & 1.39          & \textbf{10.69} & 11.97     & 32.55  & 15.71         & \textbf{11.65} & 12.96     & 33.17  & 17.10         \\
\textit{lili\_5} & 1.22          & 1.28      & \textbf{0.80} & 1.63          & \textbf{10.23} & 11.34     & 33.53  & 12.78         & \textbf{11.45} & 12.62     & 34.33  & 14.41         \\ \bottomrule
\end{tabular}
\begin{tablenotes}
\footnotesize
\item[1] Average time consumption per scan of incremental updates, including point-wise insertion with on-tree downsampling and box-wise delete.
\item[2] Average time consumption per scan of single-thread $k$NN search.
\end{tablenotes}
\end{threeparttable}
\end{table*}

We first compare the time consumption of point insertion (with map downsampling) and $k$NN search at different tree sizes across all the 18 sequences. For each evaluated tree size $S$, we collect the processing time at tree size of $[S-5\%S, S+5\%S]$ to obtain a sufficient number of samples.  Fig. \ref{fig:dscmp_size} shows the average time consumption of insertion and $k$NN search per single target point. The octree achieves the best performance in point insertion, albeit the gap with the other is small (below 1 $\mu s$), but its inquiry time is much higher due to the unbalanced tree structure. For \textit{nanoflann} k-d tree, the insertion time is often slightly shorter than the \textit{ikd-Tree} and R$^\ast$-tree, but huge peaks occasionally occur due to its logarithmic structure of organizing a series of k-d trees. Such peaks severely degrade the real-time ability, especially when maintaining a large map. For $k$-nearest neighbor search, \textit{nanoflann} k-d tree consumes slightly higher time than our \textit{ikd-Tree}, especially when the tree size becomes large ($10^5 \sim 10^6$). The R$^\ast$-tree achieves a similar insertion time with {\it ikd-Tree} but with a significantly higher search time for large tree sizes. Finally, we can see that the time of insertion with on-tree downsampling and $k$NN search of \textit{ikd-Tree} is indeed proportional to $\log n$, which is consistent with the time complexity analysis in Section. \ref{subsec:timeanalysis}. 

For any map data structure to be used in LiDAR odometry and mapping, the total time for map inquiry (i.e., $k$NN search) and incremental map update (i.e., point insertion with downsampling and box-delete due to map move) ultimately affects the real-time ability. This total time is summarized in Table \ref{tab:dscomp}. It is seen that octree performs the best in incremental updates in most datasets, followed closely by the \textit{ikd-Tree} and the \textit{nanoflann} k-d tree. In $k$NN search, the \textit{ikd-Tree} has the best performance while the \textit{ikd-Tree} and \textit{nanoflann} k-d tree outperforms the other two by large margins, which is consistent with the past comparative study\cite{libnabo2012,vermeulen2017comparative}. The \textit{ikd-Tree} achieves the best overall performance among all other data structures. 

We should remark that while the \textit{nanoflann} k-d tree achieves seemly similar performance with {\it ikd-Tree}, the peak insertion time has more profound causes, and its impact on LiDAR odometry and mapping is severe. The \textit{nanoflann} k-d tree deletes a point by only masking it without actually deleting it from the tree. Consequently, even with map downsampling and map move, the deleted points remain on the tree affecting the subsequent inquiry and insertion performance. The resultant tree size grows much quicker than {\it ikd-Tree} and others, a phenomenon also observed from Fig. \ref{fig:dscmp_size}. The effect could be small for short sequences (\textit{ulhk} and \textit{lili}) but becomes evident for long sequences (\textit{utbm} and \textit{nclt}). The tree size of \textit{nanoflann} k-d tree exceeds $6\times10^6$ in \textit{utbm} datasets and $10^7$ in \textit{nclt} datasets, whereas the maximal tree size of \textit{ikd-Tree} reaches $2\times10^6$ and $3.6\times10^6$, respectively. The maximal processing time of incremental updates on \textit{nanoflann} all exceeds 3 $s$ in seven \textit{utbm} datasets and 7 $s$ in three \textit{nclt} datasets while our \textit{ikd-Tree} keeps the maximal processing time at 214.4 $ms$ in \textit{nclt\_2} and smaller than 150 $ms$ in the rest 17 sequences. While this peaked processing time of \textit{nanoflann} does not heavily affect the overall real-time ability due to its low occurrence, it causes a catastrophic delay for subsequent control.

\subsection{Accuracy Evaluation}\label{sec:benchmark_accuracy}
In this section, we compare the overall system FAST-LIO2 against other state-of-the-art LiDAR-inertial odometry and mapping systems, including LILI-OM~\cite{li2021towards}, LIO-SAM \cite{T2020liosam}, and LINS \cite{qin2019lins}. Since FAST-LIO2 is an odometry without any loop detection or correction, for the sake of fair comparison, the loop closure module of LILI-OM and LIO-SAM was deactivated, while all other functions such as sliding window optimization are enabled. We also perform ablation study on FAST-LIO2: to understand the influence of the map size, we run the algorithm in various map sizes $L$ of 2000 $m$, 800 $m$, 600 $m$, besides the default 1000 $m$; to evaluate the effectiveness of direct method against feature-based methods, we add a feature extraction module from FAST-LIO\cite{xu2020fastlio} (optimized for solid-state LiDAR) and BALM\cite{liu2021balm} (optimized for spinning LiDAR). The results are reported under the keyword ``Feature". All the experiments are conducted in the Manifold 2-C platform (Intel).  

We perform evaluations on all the five datasets: {\it lili, lisam, utbm, ulhk}, and {\it nclt}.  Since not all sequences have ground truth (affected by the weather, GPS quality, etc.), we select a total of 19 sequences from the five datasets. These 19 sequences either have a good ground truth trajectory (as recommended by the dataset author) or end at the starting position. Therefore, two criteria, absolute translational error (RMSE) and end-to-end error, are computed and evaluated. 

\subsubsection{RMSE Benchmark}
The RMSE are computed and reported in Table.~\ref{tab:rmse_benchmark}. It is seen that increasing the map size of FAST-LIO2 increases the overall accuracy as the new can is registered to older historical points when the LiDAR revisits a previous place. When the map size is over 2000 $m$, the accuracy increment is not persistent as the odometry drift may cause possible false point match with too old map points, a typical phenomenon of any odometry. Moreover, the direct method outperforms the feature-based variant of FAST-LIO2 in most sequences except for two, {\it nclt\_4} and {\it nclt\_6},  where the difference is tiny and negligible. This proves the effectiveness of the direct method.
 \begin{table*}[t]
\footnotesize
\centering
\caption{Absolute translational errors (RMSE, meters) in sequences with good quality ground truth}
\label{tab:rmse_benchmark}
\begin{threeparttable}
\begin{tabular}{@{}lrrrrrrrrrrrr@{}}
\toprule
 & \textit{utbm\_8} & \textit{utbm\_9} & \textit{utbm\_10} & \textit{ulhk\_4} & \textit{nclt\_4} & \textit{nclt\_5} & \textit{nclt\_6} & \textit{nclt\_7} & \textit{nclt\_8} & \textit{nclt\_9} & \textit{nclt\_10} & \textit{liosam\_1} \\ \midrule
FAST-LIO2 (2000m)   & \textbf{25.3}    & \textbf{51.6}    & 16.89             & 2.57             & 8.63             & 6.66             & 21.01            & 6.59             & 30.59            & 5.72             & 17.14             & 4.62               \\
FAST-LIO2 (1000m)   & 27.29            & \textbf{51.6}    & \textbf{16.8}     & 2.57             & 8.71             & 6.68             & 20.96            & \textbf{6.58}             & \textbf{30.08}   & \textbf{5.56}    & \textbf{16.29}    & \textbf{4.58}      \\
FAST-LIO2 (800m)    & 25.8             & 51.86            & 17.23             & 2.57             & 8.72             & \textbf{6.65}             & 21.03            & 6.99             & 30.74            & 5.95             & 16.73             & \textbf{4.58}      \\
FAST-LIO2 (600m)    & 27.75            & 52.09            & 17.3              & 2.57             & 8.58             & 6.69             & 20.96            & 6.82             & 30.24            & 5.8              & 16.81             & \textbf{4.58}      \\
FAST-LIO2 (Feature) & 27.21            & 53.81            & 22.59             & 2.61             & \textbf{8.5}     & 7.82             & \textbf{20.57}   & 6.77             & 31.17            & 6.09             & 16.61             & 7.85               \\
LILI-OM             & 59.48            & 782.11           & 17.59             & \textbf{2.29}    & 317.77           & 12.42            & 260.76           & 12.17            & 276.74           & 7.39             & 328.87            & 18.78              \\
LIO-SAM             & —\tnote{1}           & —           & —       & 3.52             & 9461             & 7.15             & ×\tnote{2}          & 22.26            & 44.83            & 7.43             & 1077.5            & 4.75               \\
LINS                & 48.17            & 54.35            & 60.48             & 3.11             & 65.95            & 1051             & 243.87           & 378.99           & 106.03           & 11.13            & 2995.9            & 880.92             \\ \bottomrule
\end{tabular}
\begin{tablenotes}
\footnotesize
\item[1] Dataset \textit{utbm} does not produce the attitude quaternion data which is necessary for LIO-SAM, therefore LIO-SAM does not work on all the sequences in \textit{utbm} dataset, denoted as —.
\item[2] × denotes that the system totally failed.
\end{tablenotes}
\end{threeparttable}
\end{table*}
 
Compared with other LIO methods, FAST-LIO2 or its variant achieves the best performances in 18 of all 19 data sequences and is the most robust LIO method among all the experiments. The only exception is on {\it ulhk\_4} where LILI-OM shows slightly higher accuracy than FAST-LIO. Notably, LILI-OM shows very large drift in \textit{utbm\_9}, \textit{nclt\_4}, \textit{nclt\_6}, \textit{nclt\_8} and \textit{nclt\_10}. The reason is that its sliding-window back-end fusion ({\it mapping}) fails as the map point number grows large. Hence its pose estimation relies solely on the front-end {\it odometry} which quickly accumulates the drift. LINS works similarly badly in \textit{nclt\_5}, \textit{nclt\_6}, \textit{nclt\_7}, \textit{nclt\_10}. LIO-SAM also shows large drift at \textit{nclt\_4}, \textit{nclt\_10} due to the failure of back-end factor graph optimization with the very long time and long-distance data. The video of an example, {\it nclt\_10} sequence, is available at \url{https://youtu.be/2OvjGnxszf8}. Besides, on other sequences where LILI-OM, LIO-SAM, and LINS can work normally, their performance is still outperformed by FAST-LIO2 with large margins. Finally, it should be noted that the sequence {\it liosam\_1} is directly drawn from the work LIO-SAM \cite{T2020liosam} so the algorithm has been well-tuned for the data. However, FAST-LIO2 still achieves higher accuracy.

\subsubsection{Drift Benchmark}
The end-to-end errors are reported in Table.~\ref{tab:ee_benchmark}. The overall trend is similar to the RMSE benchmark results. FAST-LIO2 or its variants achieves the lowest drift in 5 of the total 7 sequences. We show an example, {\it ulhk\_6} sequence, in the video available at \url{https://youtu.be/2OvjGnxszf8}. It should be noted that the LILI-OM has tuned parameters for each of their own sequences {\it lili} while parameters of FAST-LIO2 are kept the same among all the sequences. LIO-SAM shows good performance in its own sequences {\it liosam\_2} and {\it liosam\_3} but cannot keep it on other sequences such as {\it ulhk}. The LINS performs worse than LIO-SAM in {\it liosam} and {\it ulhk} datasets and failed in \textit{liosam\_2} (garden sequence from \cite{T2020liosam}) because the two sequences are recorded with large rotation speeds while the feature points used by LINS are too few. Also, in most of the sequences, the feature-based FAST-LIO performs similarly to the direct method except for the sequence {\it lili\_7}, which contains many trees and large open areas that feature extraction will remove many effective points from trees and faraway buildings.

\begin{table}[t]
\footnotesize
\centering
\caption{End to end errors (meters)}
\label{tab:ee_benchmark}
\begin{threeparttable}
\begin{tabular}{@{}lrrrrrrr@{}}
\toprule
 &  \rotatebox{90}{\textit{lili\_6}} & \rotatebox{90}{\textit{lili\_7}} & \rotatebox{90}{\textit{lili\_8}} & \rotatebox{90}{\textit{ulhk\_5}} & \rotatebox{90}{\textit{ulhk\_6}} & \rotatebox{90}{\textit{liosam\_2}} & \rotatebox{90}{\textit{liosam\_3}} \\ \midrule
\begin{tabular}[c]{@{}l@{}}FAST-LIO2\\ (2000m)\end{tabular}   &  0.14                    & \multicolumn{1}{l}{1.92} & 21.35                              & 0.33             & 0.12                   & \textbf{\textless 0.1} & 9.23               \\
\begin{tabular}[c]{@{}l@{}}FAST-LIO2\\ (1000m)\end{tabular}   & \textbf{\textless 0.1}   & 1.63                     & \multicolumn{1}{l}{{17.39}} & 0.39             & \textbf{\textless 0.1} & \textbf{\textless 0.1} & 9.50               \\
\begin{tabular}[c]{@{}l@{}}FAST-LIO2\\ (800m)\end{tabular}    & \textbf{\textless 0.1}   & 1.88                     & \multicolumn{1}{l}{21.59}          & 0.40             & \textbf{\textless 0.1} & \textbf{\textless 0.1} & 9.49               \\
\begin{tabular}[c]{@{}l@{}}FAST-LIO2\\ (600m)\end{tabular}    & 0.22                     & \textbf{1.37}            & \multicolumn{1}{l}{23.74}          & 0.39             & \textbf{\textless 0.1} & \textbf{\textless 0.1} & 9.23               \\
\begin{tabular}[c]{@{}l@{}}FAST-LIO2\\ (Feature)\end{tabular} & 0.20                     & \multicolumn{1}{l}{3.89} & \multicolumn{1}{l}{21.99}          & \textbf{0.32}    & \textbf{\textless 0.1} & \textbf{\textless 0.1} & {12.11}     \\
LILI-OM                                                       & 0.80                     & 4.13                     & \textbf{15.60}                     & {1.84}    & 7.89                   & 1.95                   & 13.79              \\
LIO-SAM                                                       & —\tnote{1}              & —                        & —                                  & 0.83             & 2.88                   & \textbf{\textless 0.1} & \textbf{8.61}      \\
LINS                                                          & —                        & —                        & —                                  & 0.90             & 6.92                   & ×\tnote{2}              & 29.90              \\ \bottomrule
\end{tabular}
\begin{tablenotes}
\footnotesize
\item[1] Since the LIO-SAM and LINS are both developed only for spinning LiDAR, they do not work on the \textit{lili} dataset which is recorded by a solid-state LiDAR Livox Horizon.
\item[2] × denotes that the system totally failed.
\end{tablenotes}
\end{threeparttable}
\end{table}

\begin{table*}[h]
\scriptsize
\centering
\caption{The Average Processing Time per Scan Benchmark in Milliseconds}
\label{tab:runtime_benchmark}
\begin{tabular}{@{}lrrrrrrrrrrrr@{}}
\toprule
& \rotatebox{0}{\begin{tabular}[c]{@{}r@{}}FAST-LIO2\\      (2000)\end{tabular}} & \rotatebox{0}{\begin{tabular}[c]{@{}r@{}}FAST-LIO2\\      (1000)\end{tabular}} & \rotatebox{0}{\begin{tabular}[c]{@{}r@{}}FAST-LIO2\\      (800)\end{tabular}} & \rotatebox{0}{\begin{tabular}[c]{@{}r@{}}FAST-LIO2\\      (600)\end{tabular}} & \rotatebox{0}{\begin{tabular}[c]{@{}r@{}}FAST-LIO2\\      (Feature)\end{tabular}} & \rotatebox{0}{\begin{tabular}[c]{@{}r@{}}FAST-LIO2\\      (ARM)\end{tabular}} & \multicolumn{2}{c}{LILI-OM} & \multicolumn{2}{c}{LIO-SAM}                         & \multicolumn{2}{c}{LINS}                            \\ \midrule
 & Total                                                           & Total                                                           & Total                                                          & Total                                                          & Total                                                          & Total                                                              & Odo.         & Map.         & Odo.                     & Map.                     & Odo.                     & Map.                     \\
\textit{lili\_6}                                & 13.15                                                           & \textbf{12.56}                                                  & 13.22                                                          & 15.92                                                          & 15.35                                                              & 45.58                                                          & 68.95        & 58.46        & {\color[HTML]{333333} —} & {\color[HTML]{333333} —} & {\color[HTML]{333333} —} & {\color[HTML]{333333} —} \\
\textit{lili\_7}                                & \textbf{16.93}                                                  & 17.61                                                           & 20.39                                                          & 19.72                                                          & 21.13                                                              & 65.89                                                          & 40.01        & 83.71        & {\color[HTML]{333333} —} & {\color[HTML]{333333} —} & {\color[HTML]{333333} —} & {\color[HTML]{333333} —} \\
\textit{lili\_8}                                & \textbf{14.73}                                                  & 15.31                                                           & 17.73                                                          & 17.15                                                          & 18.37                                                              & 57.29                                                          & 61.80        & 79.11        & {\color[HTML]{333333} —} & {\color[HTML]{333333} —} & {\color[HTML]{333333} —} & {\color[HTML]{333333} —} \\
\textit{utbm\_8}                                & 21.72                                                           & 22.05                                                           & 21.39                                                          & \textbf{20.82}                                                 & 21.16                                                              & 100.00                                                         & 65.29        & 84.76        & {\color[HTML]{333333} —} & {\color[HTML]{333333} —} & 37.44                    & 153.92                   \\
\textit{utbm\_9}                                & 28.26                                                           & 25.44                                                           & 21.41                                                          & 21.35                                                          & \textbf{17.46}                                                     & 91.05                                                          & 68.94        & 97.90        & {\color[HTML]{333333} —} & {\color[HTML]{333333} —} & 38.82                    & 154.06                   \\
\textit{utbm\_10}                               & 23.90                                                           & 22.48                                                           & 23.09                                                          & 20.74                                                          & \textbf{15.30}                                                     & 94.62                                                          & 66.10        & 97.29        & {\color[HTML]{333333} —} & {\color[HTML]{333333} —} & 33.61                    & 166.12                   \\
\textit{ulhk\_4}                                & 20.86                                                           & 20.14                                                           & 19.96                                                          & \textbf{20.04}                                                 & 29.35                                                              & 91.12                                                          & 52.40        & 74.80        & 39.50                    & 95.29                    & 34.72                    & 93.70                    \\
\textit{ulhk\_5}                                & 24.10                                                           & 23.90                                                           & 23.96                                                          & \textbf{23.75}                                                 & 28.70                                                              & 68.04                                                          & 53.56        & 47.68        & 25.68                    & 127.63                   & 28.01                    & 99.13                    \\
\textit{ulhk\_6}                                & 30.52                                                           & 31.56                                                           & 30.15                                                          & 29.25                                                          & 31.94                                                              & 92.38                                                          & 64.46        & 70.43        & \textbf{15.16}           & 164.36                   & 41.54                    & 199.96                   \\
\textit{nclt\_4}                                & 15.65                                                           & 15.72                                                           & 15.79                                                          & 15.75                                                          & 19.98                                                              & 69.09                                                          & 62.49        & 98.46        & \textbf{13.38}           & 184.03                   & 46.43                    & 188.40                   \\
\textit{nclt\_5}                                & 16.56                                                           & 16.60                                                           & 16.61                                                          & 16.58                                                          & \textbf{13.54}                                                     & 68.95                                                          & 67.64        & 83.34        & 19.09                    & 184.46                   & 47.83                    & 198.88                   \\
\textit{nclt\_6}                                & 15.92                                                           & 15.84                                                           & 15.83                                                          & 15.68                                                          & \textbf{14.72}                                                     & 66.64                                                          & 76.10        & 133.25       & ×                        & ×                        & 54.48                    & 195.31                   \\
\textit{nclt\_7}                                & 16.79                                                           & 16.87                                                           & 16.82                                                          & 16.63                                                          & \textbf{15.16}                                                     & 70.24                                                          & 67.65        & 81.69        & 29.50                    & 211.18                   & 56.94                    & 197.71                   \\
\textit{nclt\_8}                                & 14.29                                                           & 14.25                                                           & 14.32                                                          & 14.14                                                          & \textbf{7.94}                                                      & 57.03                                                          & 53.54        & 57.54        & 16.30                    & 163.09                   & 53.53                    & 144.95                   \\
\textit{nclt\_9}                                & 13.73                                                           & 13.65                                                           & 13.60                                                          & 13.64                                                          & \textbf{10.30}                                                     & 54.82                                                          & 42.84        & 68.86        & 12.79                    & 118.35                   & 46.12                    & 149.45                   \\
\textit{nclt\_10}                               & 21.85                                                           & 21.79                                                           & 21.78                                                          & 21.61                                                          & \textbf{20.62}                                                     & 89.65                                                          & 82.92        & 130.96       & 23.13                    & 324.62                   & 83.12                    & 252.68                   \\
\textit{liosam\_1}                              & 16.95                                                           & 14.77                                                           & 14.65                                                          & 16.19                                                          & 15.93                                                              & 60.60                                                          & 48.45        & 84.28        & \textbf{13.47}           & 135.39                   & 24.13                    & 179.44                   \\
\textit{liosam\_2}                              & \textbf{11.11}                                                  & 11.47                                                           & 11.52                                                          & 11.19                                                          & 19.68                                                              & 45.27                                                          & 42.58        & 99.01        & 13.09                    & 154.69                   & 20.71                    & 160.66                   \\
\textit{liosam\_3}                              & 19.38                                                           & 16.64                                                           & {12.00}                                                 & 13.01                                                          & 12.37                                                              & 44.26                                                          & 38.42        & 64.02        & \textbf{11.32}           & 124.35                   & 40.47                    & 117.25                   \\ \bottomrule
\end{tabular}
\end{table*}

\subsection{Processing Time Evaluation}
Table.\ref{tab:runtime_benchmark} shows the processing time of FAST-LIO2 with different configurations, LILI-OM, LIO-SAM, and LINS in all the sequences. The FAST-LIO2 is an integrated odometry and mapping architecture, where at each step the map is updated following immediately the odometry update. Therefore, the total time (``Total" in Table.\ref{tab:runtime_benchmark}) includes all possible procedures occurred in the odometry, including feature extraction if any (e.g., for the feature-based variant), motion compensation, $k$NN search, and state estimation, and mapping. It should be noted that the mapping includes point insertion, box-wise delete, and tree re-balancing. On the other hand, LILI-OM, LIO-SAM, and LINS all have separate odometry (including feature extraction, and rough pose estimation) and mapping (such as back-end fusion in LILI-OM~\cite{li2021towards}, incremental smoothing and mapping in LIO-SAM\cite{T2020liosam} and Map-refining in LINS\cite{qin2019lins}), whose average processing time per LiDAR scan are referred to as ``Odo." and ``Map." respectively in Table.~\ref{tab:runtime_benchmark}. The two processing time is summed up to compare with FAST-LIO2. 

From Table.~\ref{tab:runtime_benchmark}, we can see that the FAST-LIO2 consumes considerably less time than other LIO methods, being x8 faster than LILI-OM, x10 faster than LIO-SAM, and x6 faster than LINS. Even if only considering the processing time for odometry of other methods, FAST-LIO2 is still faster in most sequences except for four. The overall processing time of fast-LIO2, including both odometry and mapping, is almost the same as the odometry part of LIO-SAM, x3 faster than LILI-OM and over x2 faster than LINS. Comparing the different variants of FAST-LIO2, the processing time for different map sizes are very similar, meaning that the mapping and $k$NN search with our {\it ikd-Tree} is insensitive to map size. Furthermore, the feature-based variant and direct method FAST-LIO2 have roughly similar processing times. Although feature extraction takes additional processing time to extract the feature points, it leads to much fewer points (hence less time) for the subsequent $k$NN search and state estimation. On the other hand, the direct method saves the feature extraction time for points registration. Allowed by the superior computation efficiency of FAST-LIO2, we further implemented it with the default map size (1000 $m$, see \ref{sec:benchmark_accuracy}) on the Khadas VIM3 (ARM) embedded computer. The run time results show that FAST-LIO2 can also achieve 10 $Hz$ real-time performance that has not been demonstrated on an ARM-based platform by any prior work.

\section{Real-world Experiments}\label{sec:experiment}
\subsection{Platforms}

\begin{figure}[t]
    \begin{center}
        {\includegraphics[width=1\columnwidth]{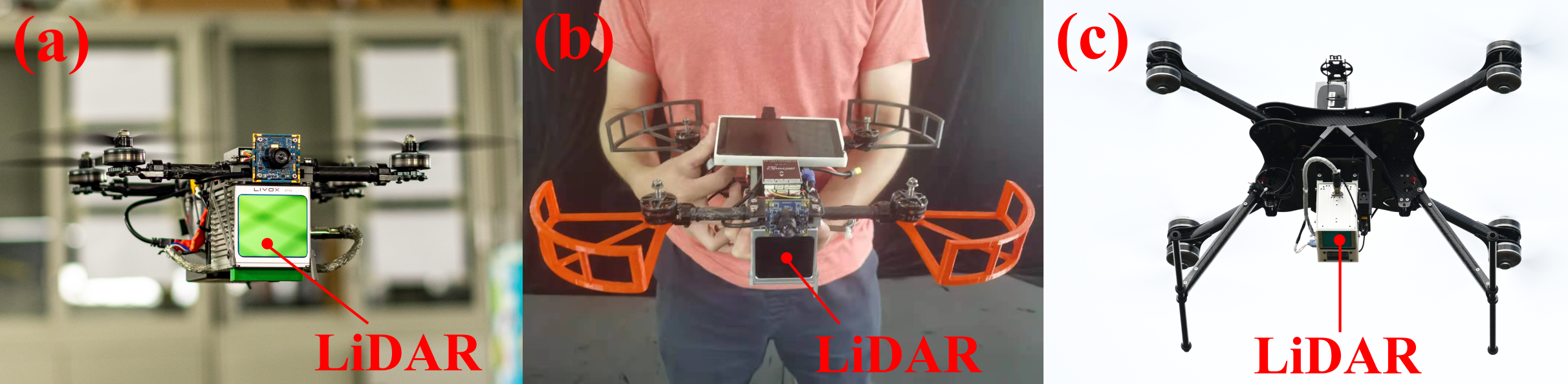}\vspace{-0.25cm}}
    \end{center}
    \caption{\label{fig:platforms}Three different platforms: (a) 280 $mm$ wheelbase small scale quadrotor UAV carrying a forward-looking Livox Avia LiDAR, (b) handheld platforms, (c) 750 $mm$ wheelbase quadrotor UAV carrying a down-facing Livox Avia LiDAR. All three platforms carry the same DJI Manifold-2C onboard computer. The video of real-world experiments is available at \url{https://youtu.be/2OvjGnxszf8}.}
\end{figure}

Besides the benchmark evaluation where the datasets are mainly collected on the ground, we also test our FAST-LIO2 in a variety of challenging data collected by other platforms (see Fig.~\ref{fig:platforms}), including a 280 $mm$ wheelbase quadrotor for the application of UAV navigation, a handheld platform for the application of mobile mapping, and a GPS-navigated 750 $mm$ wheelbase quadrotor UAV for the application of aerial mapping. The 280 $mm$ wheelbase quadrotor is used for indoor aggressive flight test, see section \ref{sec:uav_aggressive_test}, so that the LiDAR is installed face-forward. The 750 $mm$ wheelbase quadrotor UAV, developed by Ambit-Geospatial company\footnote[9]{\url{http://www.ambit-geospatial.com.hk}}, is used for the aerial scanning, see section \ref{sec:outdoor_aerial_test}, so that the LiDAR is facing down to the ground. In all platforms, we use a solid-state 3D LiDAR Livox Avia\footnote[10]{\url{https://www.livoxtech.com/de/avia}} which has a built-in IMU (model BMI088), a 70.4$^\circ$ (Horizontal) × 77.2$^\circ$ (Vertical) circular FoV, and an unconventional non-repetitive scan pattern that is different from the Livox Horizon or Velodyne LiDARs used previously in Section. \ref{sec:benchmark}. Since FAST-LIO2 does not extract features, it is naturally adaptable to this new LiDAR. In all the following experiments, FAST-LIO2 uses the default configurations (i.e., direct method with map size 1000 $m$). Unless stated otherwise, the scan rate is set at 100 $Hz$, and the computation platform is the DJI manifold 2-C used in the previous section.

\subsection{Private Dataset}
\subsubsection{Detail Evaluation of Processing Time}\label{sec:big_scene}
In order to validate the real-time performance of FAST-LIO2, we use the handheld platform to collect a sequence at 100 $Hz$ scan rate in a large-scale outdoor-indoor hybrid scene. The sensor returns to the starting position after traveling around $650m$. It should be noted that the LILI-OM also supports solid-state LiDAR, but it fails in this data since its feature extraction module produces too few features at the 100 $Hz$ scan rate. The map built by FAST-LIO2 in real-time is shown in Fig.~\ref{fig:large}, which shows small drift (i.e., 0.14 $m$) and good agreement with satellite maps.
\begin{figure}[t]
    \begin{center}
        {\includegraphics[width=1\columnwidth]{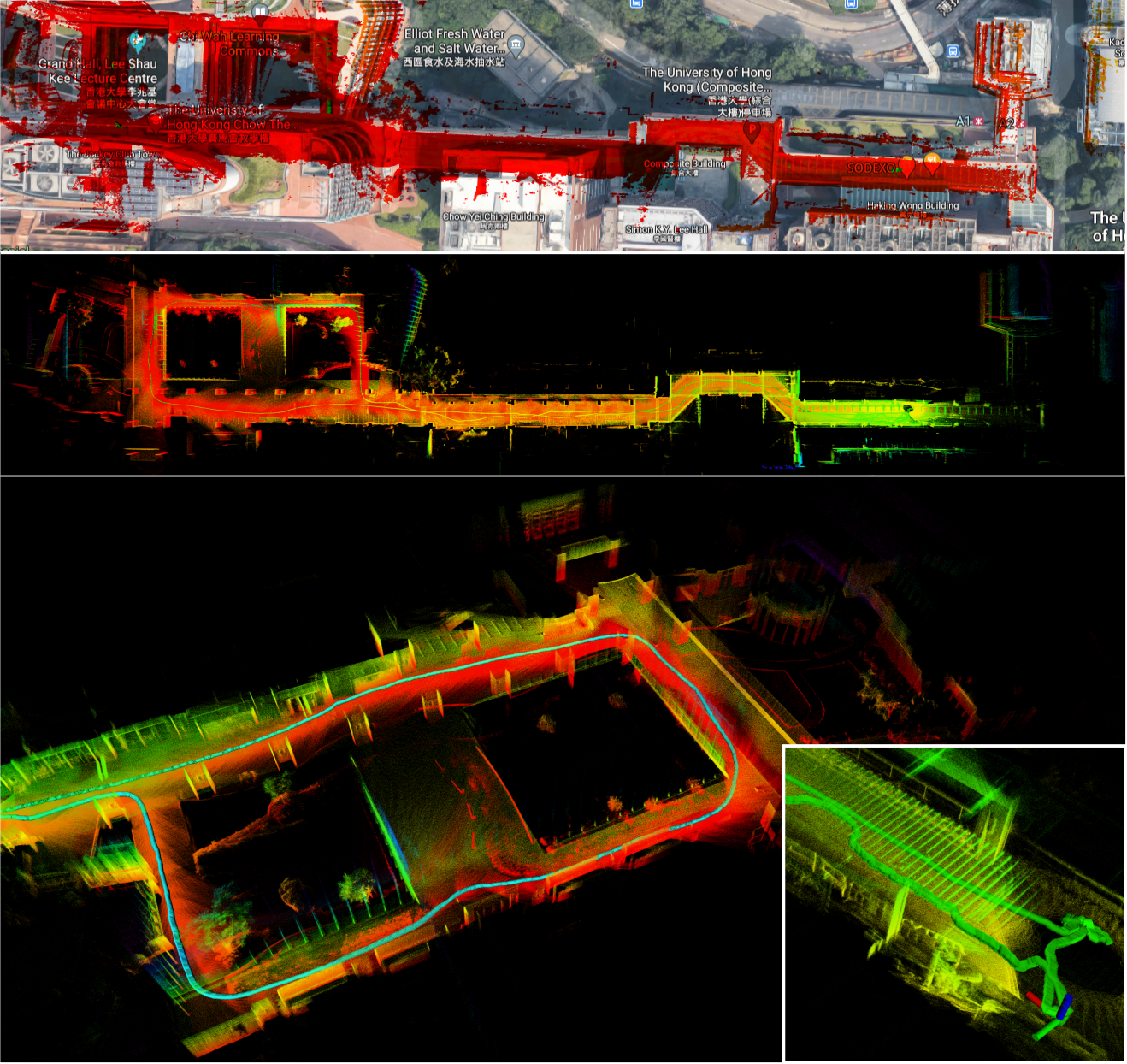}\vspace{-0.25cm}}
    \end{center}
    \caption{\label{fig:large}Large-scale scene experiment.}
\end{figure}

For the computation efficiency, we compare FAST-LIO2 with its predecessor FAST-LIO \cite{xu2020fastlio} on the Intel (Manifold 2-C) computer. For FAST-LIO2, we additionally test on the ARM (Khadas VIM3) onboard computer. The difference between these two methods is that FAST-LIO is a feature-based method, and it retrieves map points in the current FoV to build a new static k-d tree for $k$NN search at every step. The detailed time consumption of individual components for processing a scan is shown in Table.~\ref{tab:detail_runtime}. The preprocessing refers to data reception and formatting, which are identical for FAST-LIO and FAST-LIO2 and are below 0.1 $ms$. The feature extraction of FAST-LIO is 0.9 $ms$ per scan, which is saved by FAST-LIO2. The feature extraction leads to fewer point numbers than FAST-LIO2 (447 versus 756), hence less time spent in state estimation (0.99 $ms$ versus 1.66 $ms$). As a result, the overall odometry time of the two methods is nevertheless very close (1.92 $ms$ for FAST-LIO versus 1.69 $ms$ for FAST-LIO2). The difference between these two methods becomes drastic when looking at the mapping module, which includes map points retrieve and k-d tree building for FAST-LIO, and point insertion, box-wise delete due to map move and tree rebalancing for FAST-LIO2. As can be seen, the averaging mapping time per scan for FAST-LIO exceeds 10 $ms$ hence cannot be processed in real-time for this large scene. On the other hand, the mapping time for FAST-LIO2 is well below the sampling period. The overall time for FAST-LIO2 when processing 756 points per scan, including both odometry and mapping, is only 1.82 $ms$ for the Intel processor and 5.23 $ms$ for the ARM processor.
\begin{table}[t]
\footnotesize
\centering
\caption{Mean Time Consumption in Miliseconds by Individual Components when Processing A LiDAR Scan}
\label{tab:detail_runtime}
\begin{tabular}{@{}lrcrr@{}}
\toprule
  &  FAST-LIO && \multicolumn{2}{c}{FAST-LIO2}\\
\cmidrule{2-2} \cmidrule{4-5} 
                   & Intel && Intel& ARM \\
\midrule
Preprocessing         & 0.03 $ms$ && 0.03 $ms$ & 0.05 $ms$ \\
Feature extraction & 0.90 $ms$ && 0 $ms$ & 0 $ms$ \\
State estimation   & 0.99 $ms$ && 1.66 $ms$ & 4.75 $ms$ \\

Mapping            & 13.81 $ms$ && 0.13 $ms$ & 0.43 $ms$ \\

Total              & 15.83 $ms$ && 1.82 $ms$ & 5.23 $ms$ \\
Num. of points used & 447      && 756       & 756  \\
Num. of threads    & 4         && 4         & 2    \\
\bottomrule
\end{tabular}
\end{table}
\begin{figure}[t]
    \setlength\abovecaptionskip{-0.1\baselineskip}
    \centering
    \includegraphics[width=0.485\textwidth]{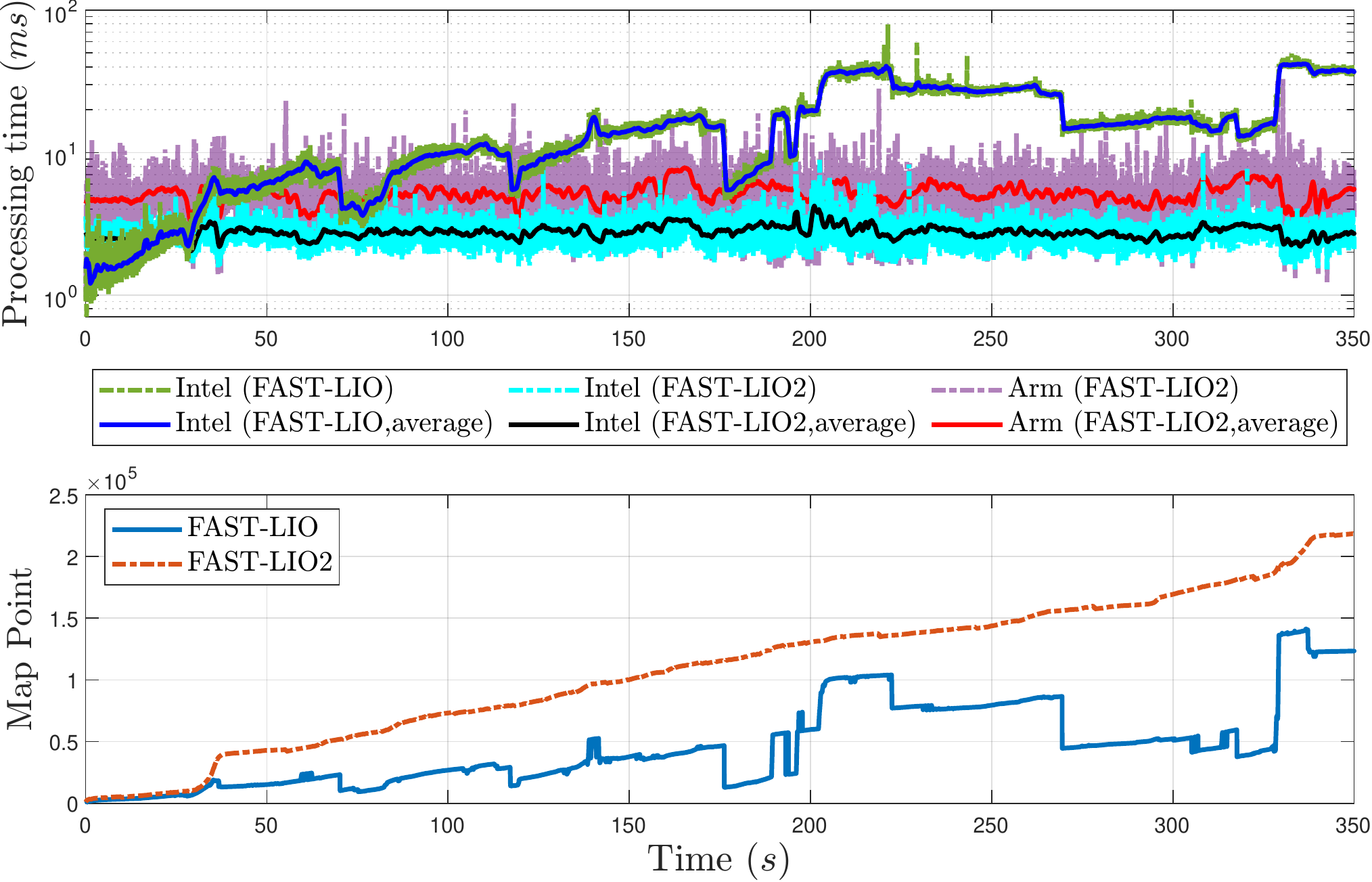}
    \caption{The processing time for each LiDAR scan of FAST-LIO and FAST-LIO2. }
    \label{fig:time_compare}
\end{figure}

The time consumption and the number of map points at each scan are shown in Fig. \ref{fig:time_compare}. As can be seen, the processing time for FAST-LIO2 running on the ARM processor occasionally exceeds the sampling period 10 $ms$, but this occurred very few and the average processing time is well below the sampling period. The occasional timeout usually does not affect a subsequent controller since the IMU propagated state estimate could be used during this short period. On the Intel processor, the processing time for FAST-LIO2 is always below the sampling period. On the other hand, the processing time for FAST-LIO quickly grows above the sampling period due to the growing number of map points. Notice that the considerably reduced processing time for FAST-LIO2 is achieved even at a much higher number of map points. Since FAST-LIO only retains map points within its current FoV, the number could drop if the LiDAR faces a new area containing few previously sampled map points. Even with fewer map points, the processing time for FAST-LIO is still much higher, as analyzed above. Moreover, since FAST-LIO builds a new k-d tree at every step, the building time has a time complexity $O(n\log n)$~\cite{bentley1975kdtree} where $n$ is the number of map points in the current FoV. This is why the processing time for FAST-LIO is almost linearly correlated to the map size. In contrast, the incremental updates of our {\it ikd-Tree} has a time complexity of $O(\log n)$, leading to a much slower increment in processing time over map size.

\subsubsection{Aggressive UAV Flight Experiment}\label{sec:uav_aggressive_test}
\begin{figure}[t]
    \begin{center}
        {\includegraphics[width=1.0\columnwidth]{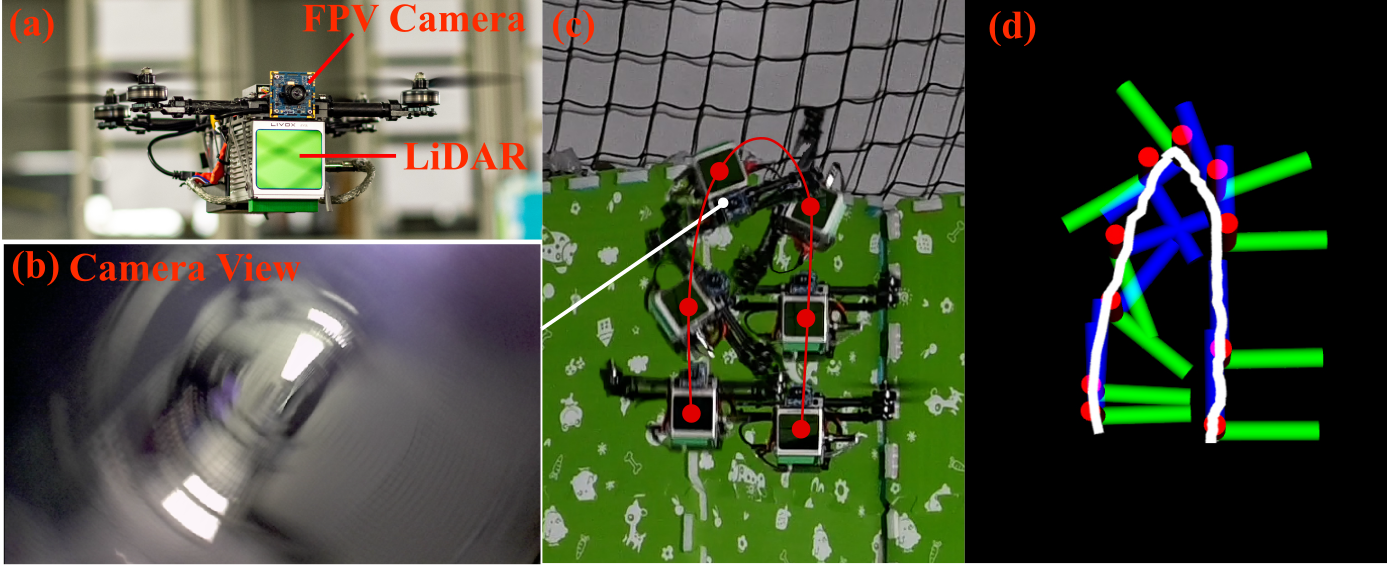}}
    \end{center}
    \vspace{-0.5cm}
    \caption{\label{fig:uav_fly} The flip experiment. (a) the small scale UAV; (b) the onboard camera showing first person view (FPV) images during the flip; (c) the third person view images of the UAV during the flip; (d) the estimated UAV pose with FAST-LIO2.}
\end{figure}
\begin{figure}[t]
    \begin{center}
        {\includegraphics[width=1.0\columnwidth]{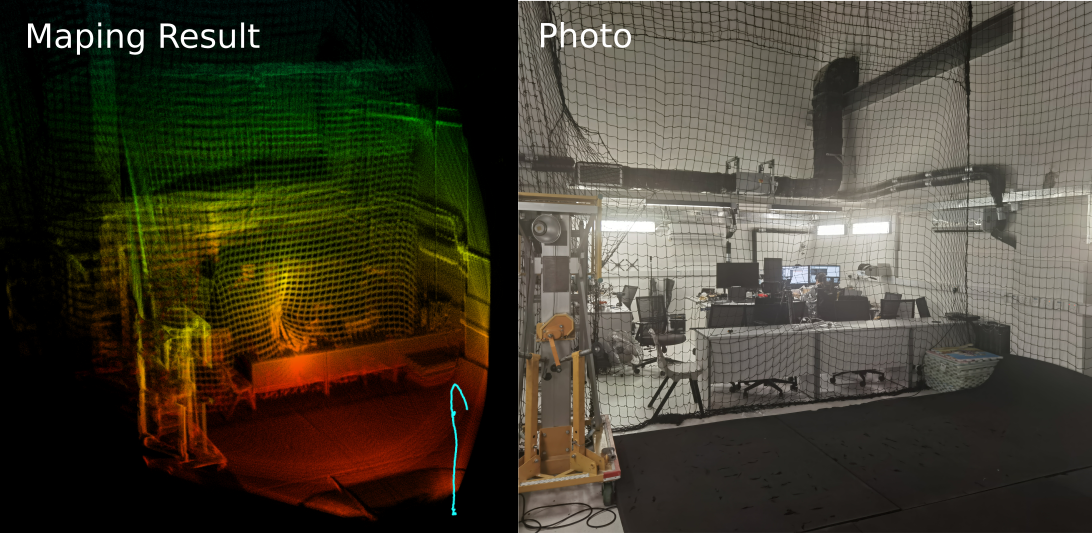}}
    \end{center}
    \vspace{-0.5cm}
    \caption{\label{fig:uav_indoor} The actual environment and the 3D map built by FAST-LIO2 during the flip.}
\end{figure}
\begin{figure}[t]
    \begin{center}
        {\includegraphics[width=1.0\columnwidth]{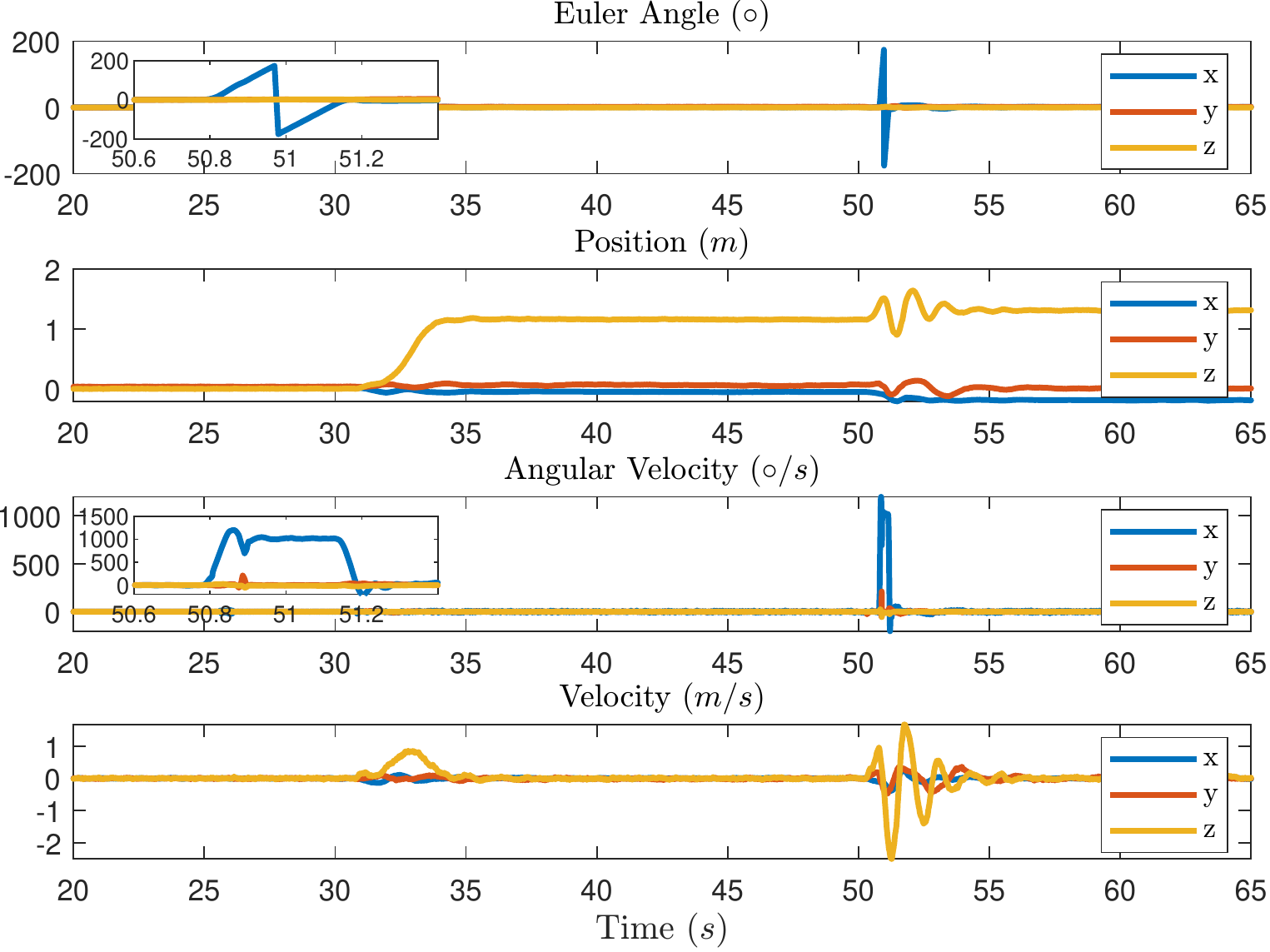}}
    \end{center}
    \vspace{-0.5cm}
    \caption{\label{fig:tumb_imu}The attitude, position, angular velocity and linear velocity in the UAV flip experiments.}
\end{figure}
In order to show the application of FAST-LIO2 in mobile robotic platforms, we deploy a small-scale quadrotor UAV carrying the Livox AVIA LiDAR sensor and conduct an aggressive flip experiment as shown in Fig.~\ref{fig:uav_fly}. In this experiment, the UAV first takes off from the ground and hovers at the height of 1.2 $m$ for a while, then it performs a quick flip, after which it returns to the hover flight under the control of an on-manifold model predictive controller \cite{lu2021model} that takes state feedback from the FAST-LIO2. The pose estimated by FAST-LIO2 is shown in Fig.~\ref{fig:uav_fly} (d), which agrees well with the actual UAV pose. The real-time mapping of the environment is shown in Fig.~\ref{fig:uav_indoor}. In addition, Fig.~\ref{fig:tumb_imu} shows the position, attitude, angular velocity, and linear velocity during the experiments. The average and maximum angular velocity during the flip reaches 912 $deg/s$ and 1198 $deg/s$, respectively (from 50.8 $s$ to 51.2 $s$). FAST-LIO2 takes only 2.01 $ms$ on average per scan, which suffices the real-time requirement of controllers. By providing high-accuracy odometry and a high-resolution 3D map of the environment at 100 $Hz$, FAST-LIO2 is very suitable for a robots' real-time control and obstacle avoidance. For example, our prior work \cite{kong2021avoiding} demonstrated the application of FAST-LIO2 on an autonomous UAV avoiding dynamic small objects (down to 9 $mm$) in complex indoor and outdoor environments.
\subsubsection{Fast Motion Handheld Experiment}\label{sec:fast_motion}
Here we test FAST-LIO2 in a challenging fast motion with large velocity and angular velocity. The sensor is held on hands while rushing back and forth on a footbridge (see Fig. \ref{fig:indoor_map}). Fig.~\ref{fig:indoor_imu} \begin{figure}[t]
    \begin{center}
        {\includegraphics[width=1.0\columnwidth]{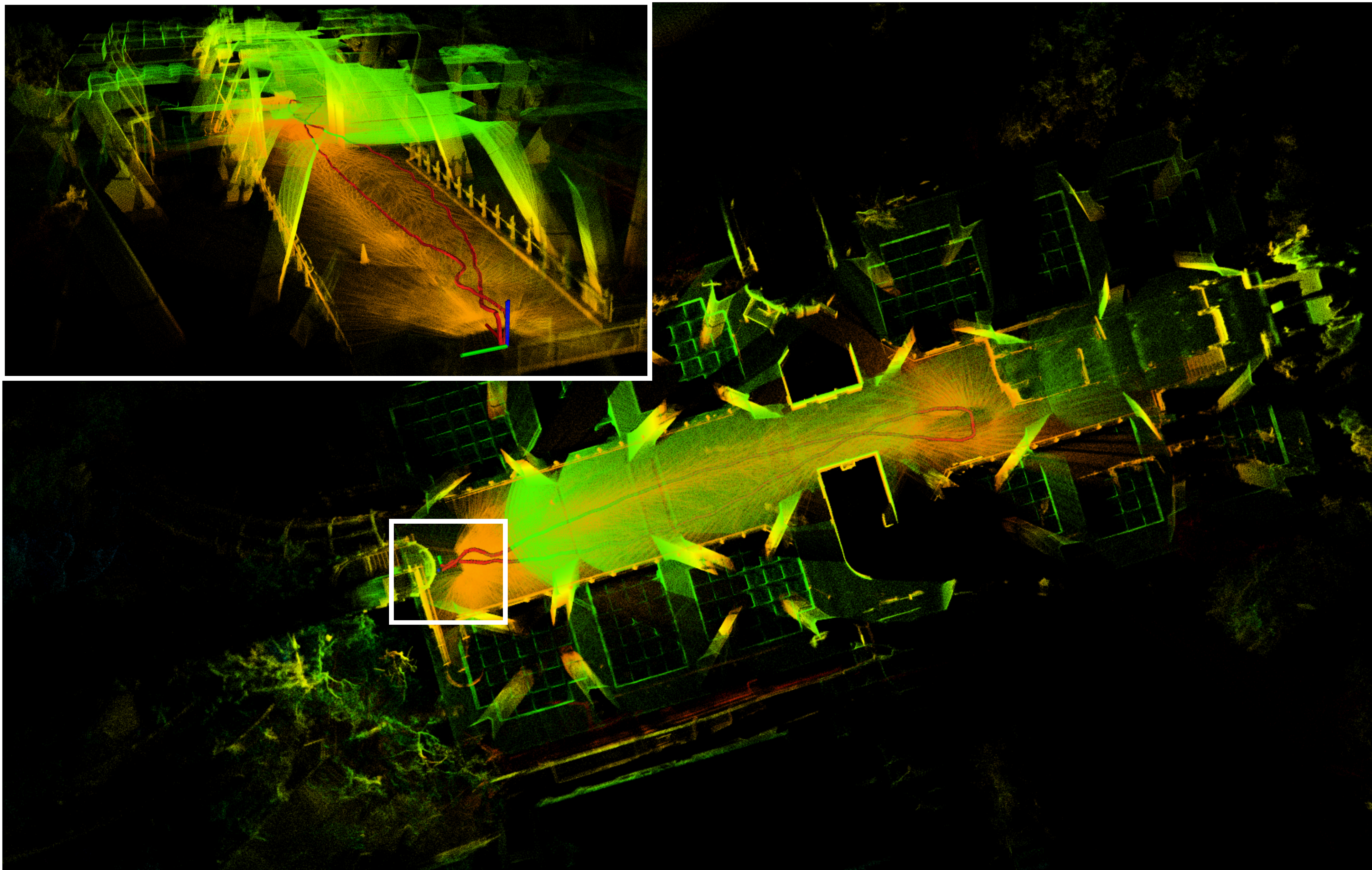}\vspace{-0.25cm}}
    \end{center}
    \vspace{-0.25cm}
    \caption{\label{fig:indoor_map}The mapping results of FAST-LIO2 in the fast motion handheld experiment.}
\end{figure}
\begin{figure}[t]
    \begin{center}
        {\includegraphics[width=1.0\columnwidth]{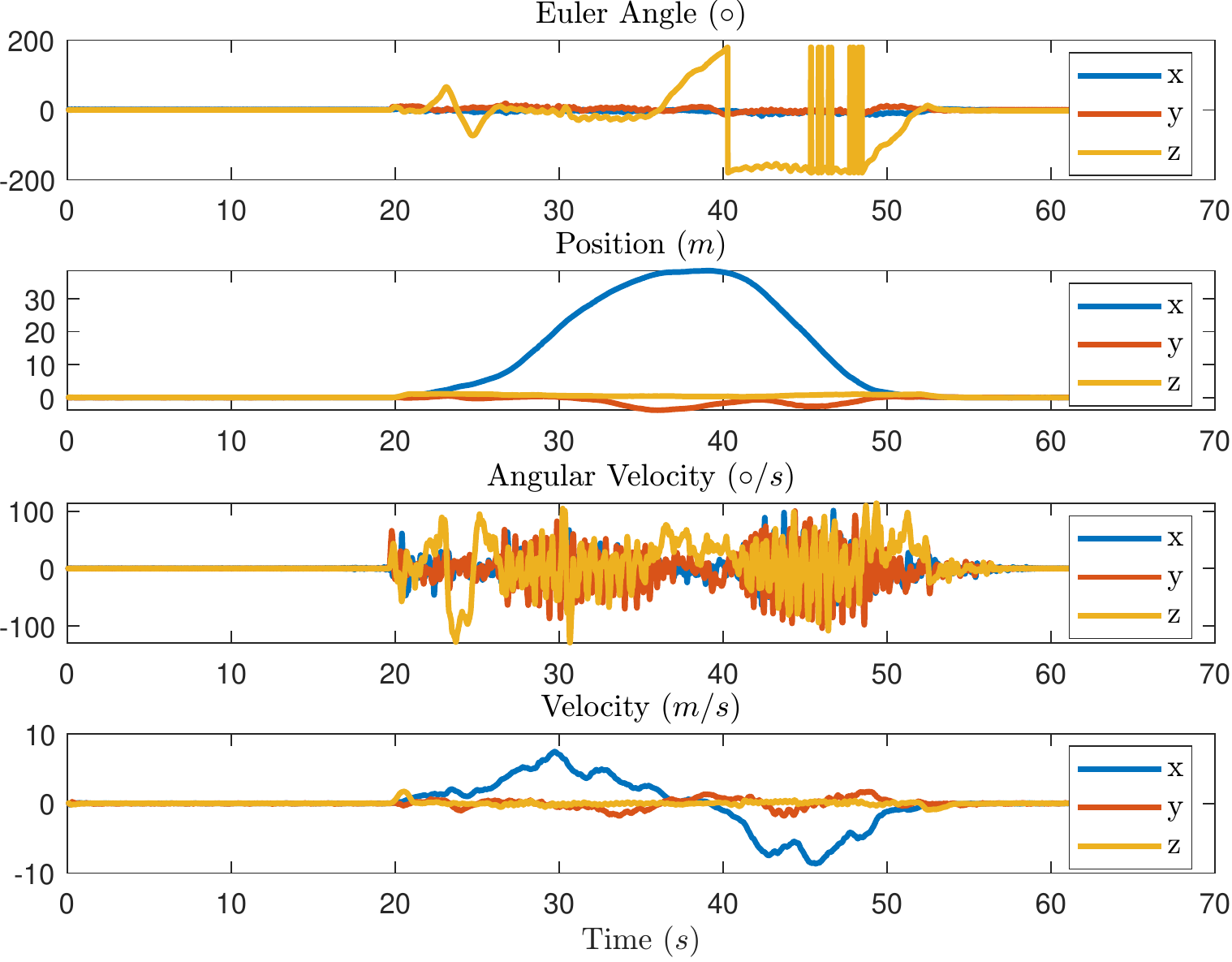}}
    \end{center}
    \vspace{-0.5cm}
    \caption{\label{fig:indoor_imu}The attitude, position, angular velocity and linear velocity in the fast motion handheld experiments.}
\end{figure} shows the attitude, position, angular velocity, and linear velocity in the fast motion handheld experiments. It is seen that the maximum velocity reaches 7 $m/s$ and angular velocity varies around $\pm$100 $deg/s$. In order to show the performance of FAST-LIO2, the experiment starts and ends at the same point. The end-to-end error in this experiment is less than 0.06 $m$ (see Fig.~\ref{fig:indoor_imu}) while the total trajectory length is 81 $m$.

\subsection{Outdoor Aerial Experiment}\label{sec:outdoor_aerial_test}
\begin{figure}[t]
    \begin{center}
        {\includegraphics[width=1\columnwidth]{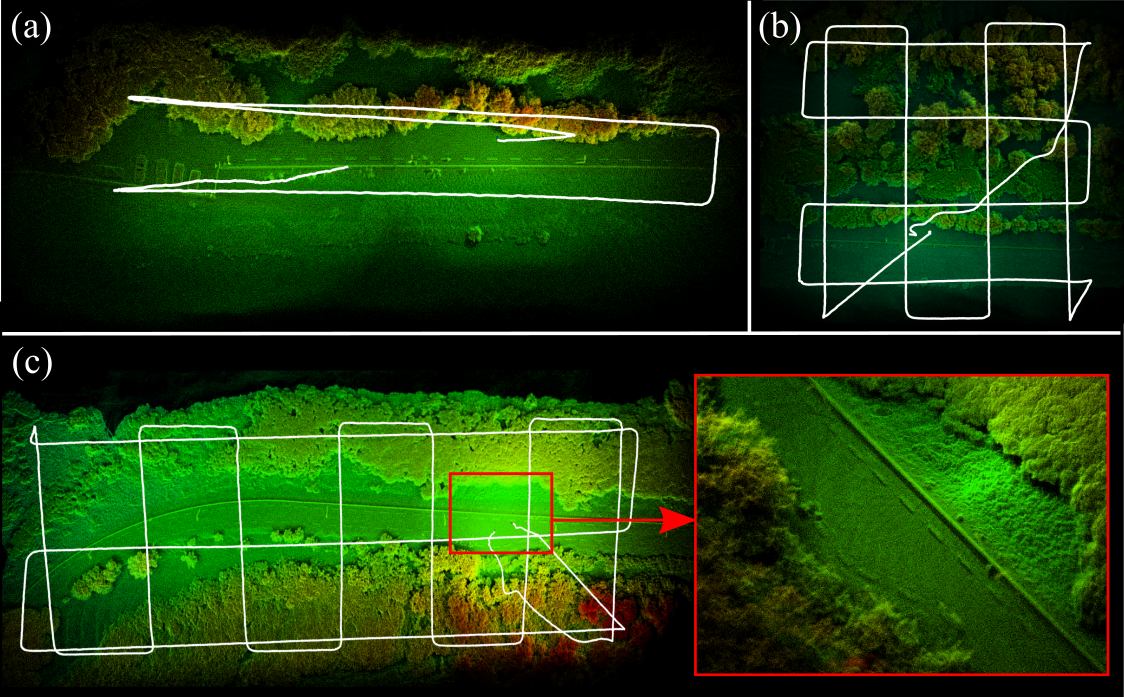}\vspace{-0.5cm}}
    \end{center}
    \caption{\label{fig:areial} Real-time mapping results with FAST-LIO2 for airborne mapping. The data is collected in the Hong Kong Wetland Park by a UAV with a down-facing Livox Avia LiDAR. The flight heights are 30 $m$ (a), 30 $m$ (b) and 50 $m$ (c).}
    \vspace{-0.1cm}
\end{figure}
One important application of 3D LiDARs is airborne mapping. In order to validate FAST-LIO2 for this possible application, an aerial experiment is conducted. A larger UAV carrying our LiDAR sensor is deployed. The UAV is equipped with GPS, IMU, and other flight avionics and can perform automatic waypoints following based on the onboard GPS/IMU navigation. Note that the UAV-equipped GPS and IMU are only used for the UAV navigation, but not for FAST-LIO2, which uses data only from the LiDAR sensor. The LiDAR scan rate is set to 10 $Hz$ in this experiment. A few flights are conducted in several locations in the Hong Kong Wetland Park at Nan Sang Wai, Hong Kong. The real-time mapping results are shown in Fig.~\ref{fig:areial}. It is seen that FAST-LIO2 works quite well in these vegetation environments. Many fine structures such as tree crowns, lane marks on the road, and road curbs can be clearly seen. Fig.~\ref{fig:areial} also shows the flight trajectories computed by FAST-LIO2. We have visually compared these trajectories with the trajectories estimated by the UAV onboard GPS/IMU navigation, and they show good agreement. Due to technical difficulties, the GPS trajectories are not available here for quantitative evaluation. Finally, the average processing time per scan for these three environments is 19.6 $ms$, 23.9 $ms$, and 23.7 $ms$, respectively. It should be noted that the LILI-OM fails in all these three data sequences because the extracted features are too few when facing the ground.

\section{Conclusion}\label{sec:conclusion}
This paper proposed FAST-LIO2, a direct and robust LIO framework significantly faster than the current state-of-the-art LIO algorithms while achieving highly competitive or better accuracy in various datasets. The gain in speed is due to removing the feature extraction module and the highly efficient mapping. A novel incremental k-d tree (\textit{ikd-Tree}) data structure, which supports dynamically point insertion, delete and parallel rebuilding, is developed and validated. A large amount of experiments in open datasets shows that the proposed \textit{ikd-Tree} can achieve the best overall performance among the state-of-the-art data structure for $k$NN search in LiDAR odometry. As a result of the mapping efficiency, the accuracy and the robustness in fast motion and sparse scenes are also increased by utilizing more points in the odometry. A further benefit of FAST-LIO2 is that it is naturally adaptable to different LiDARs due to the removal of feature extraction, which has to be carefully designed for different LiDARs according to their respective scanning pattern and density.

\section*{Acknowledgement}
This project is supported by DJI under the grant 200009538. The authors gratefully acknowledge DJI for the fund support, and Livox Technology for the equipment support during the whole work. The authors would like to thank Ambit-Geospatial for the helps in the outdoor aerial experiment. The authors also appreciate Zheng Liu, Guozheng Lu, and Fangcheng Zhu for the helpful discussions.

\appendix \label{appendix:sequence_list}
The detail information about all 37 sequences used in Section. \ref{sec:benchmark} are listed in Table. \ref{tab:append:detail_sequences}.
\begin{table}[t]
\footnotesize
\centering
\caption{Details of all the sequences for the Benchmark}
\label{tab:append:detail_sequences}
\begin{tabular}{@{}llrr@{}}
\toprule
\textit{}          & Name               & \multicolumn{1}{l}{\begin{tabular}[c]{@{}l@{}}Duration\\ ($min$:$sec$)\end{tabular}} & \multicolumn{1}{l}{\begin{tabular}[c]{@{}l@{}}Distance\\ ($km$)\end{tabular}} \\ \midrule
\textit{lili\_1}   & FR-IOSB-Tree       & 2:58                                     & 0.36                                \\
\textit{lili\_2}   & FR-IOSB-Long       & 6:00                                     & 1.16                                \\
\textit{lili\_3}   & FR-IOSB-Short      & 4:39                                     & 0.49                                \\
\textit{lili\_4}   & KA-URBAN-Campus-1  & 5:58                                     & 0.50                                \\
\textit{lili\_5}   & KA-URBAN-Campus-2  & 2:07                                     & 0.20                                \\
\textit{lili\_6}   & KA-URBAN-Schloss-1 & 10:37                                    & 0.65                                \\
\textit{lili\_7}   & KA-URBAN-Schloss-2 & 12:17                                    & 1.10                                \\
\textit{lili\_8}   & KA-URBAN-East      & 20:52                                    & 3.70                                \\
\textit{utbm\_1}   & 20180713           & 16:59                                    & 5.03                                \\
\textit{utbm\_2}   & 20180716           & 15:59                                    & 4.99                                \\
\textit{utbm\_3}   & 20180717           & 15:59                                    & 4.99                                \\
\textit{utbm\_4}   & 20180718           & 16:39                                    & 5.00                                \\
\textit{utbm\_5}   & 20180720           & 16:45                                    & 4.99                                \\
\textit{utbm\_6}   & 20190110           & 10:59                                    & 3.49                                \\
\textit{utbm\_7}   & 20190412           & 12:11                                    & 4.82                                \\
\textit{utbm\_8}   & 20180719           & 15:26                                    & 4.98                                \\
\textit{utbm\_9}   & 20190131           & 16:00                                    & 6.40                                \\
\textit{utbm\_10}  & 20190418           & 11:59                                    & 5.11                                \\
\textit{ulhk\_1}   & HK-Data20190316-1  & 2:55                                     & 0.23                                \\
\textit{ulhk\_2}   & HK-Data20190426-1  & 2:30                                     & 0.55                                \\
\textit{ulhk\_3}   & HK-Data20190317    & 5:18                                     & 0.62                                \\
\textit{ulhk\_4}   & HK-Data20190117    & 5:18                                     & 0.60                                \\
\textit{ulhk\_5}   & HK-Data20190316-2  & 6:05                                     & 0.66                                \\
\textit{ulhk\_6}   & HK-Data20190426-2  & 4:20                                     & 0.74                                \\
\textit{nclt\_1}   & 20120118           & 93:53                                    & 6.60                                \\
\textit{nclt\_2}   & 20120122           & 87:19                                    & 6.36                                \\
\textit{nclt\_3}   & 20120202           & 98:37                                    & 6.45                                \\
\textit{nclt\_4}   & 20120115           & 111:46                                   & 4.01                                \\
\textit{nclt\_5}   & 20120429           & 43:17                                    & 1.86                                \\
\textit{nclt\_6}   & 20120511           & 84:32                                    & 3.13                                \\
\textit{nclt\_7}   & 20120615           & 55:10                                    & 1.62                                \\
\textit{nclt\_8}   & 20121201           & 75:50                                    & 2.27                                \\
\textit{nclt\_9}   & 20130110           & 17:02                                    & 0.26                                \\
\textit{nclt\_10}  & 20130405           & 69:06                                    & 1.40                                \\
\textit{liosam\_1} & park               & 9:11                                     & 0.66                                \\
\textit{liosam\_2} & garden             & 5:58                                     & 0.46                                \\
\textit{liosam\_3} & campus             & 16:26                                    & 1.44                                \\ \bottomrule
\end{tabular}
\end{table}

\bibliography{paper}
\end{document}